\newtheorem{theorem}{Theorem}
\newtheorem{proof}{Proof}
\newtheorem{corollary}{Corollary}
\title{Deterministic Value-Policy Gradients}
\author{Qingpeng Cai$^{1,}$\thanks{The first two authors contributed equally to this work.}, Ling Pan$^{2,*}$, Pingzhong Tang$^{2}$ \\
\textsuperscript{1} Alibaba Group\\
\textsuperscript{2} IIIS, Tsinghua University\\
qingpeng.cqp@alibaba-inc.com, pl17@mails.tsinghua.edu.cn, kenshin@tsinghua.edu.cn
}
\begin{document}

\maketitle

\begin{abstract}

Reinforcement learning algorithms such as the deep deterministic policy gradient algorithm (DDPG) has been widely used in continuous control tasks. However, the model-free DDPG algorithm suffers from high sample complexity. In this paper we consider the deterministic value gradients to improve the sample efficiency of deep reinforcement learning algorithms. Previous works consider deterministic value gradients with the finite horizon, but it is too myopic compared with infinite horizon. We firstly give a theoretical guarantee of the existence of the value gradients in this infinite setting. 
Based on this theoretical guarantee, we propose a class of the deterministic value gradient algorithm (DVG) with infinite horizon, and different rollout steps of the analytical gradients by the learned model trade off between the variance of the value gradients and the model bias. Furthermore, to better combine the model-based deterministic value gradient estimators with the model-free deterministic policy gradient estimator, we propose the deterministic value-policy gradient (DVPG) algorithm. We finally conduct extensive experiments comparing DVPG with state-of-the-art methods on several standard continuous control benchmarks. Results demonstrate that DVPG substantially outperforms other baselines.
\end{abstract}

\section{Introduction}
Silver et al. propose the deterministic policy gradient (DPG) algorithm \cite{silver2014deterministic} that aims to find an optimal deterministic policy that maximizes the expected long-term reward, which lowers the variance when estimating the policy gradient, compared to stochastic policies  \cite{sutton2000policy}.  
Lillicrap et al. further combine deep neural networks with DPG to improve the modeling capacity, and propose the deep deterministic policy gradient (DDPG) algorithm \cite{lillicrap2015continuous}.
It is recognized that DDPG has been successful in robotic control tasks such as locomotion and manipulation.  
Despite the effectiveness of DDPG in these tasks, it suffers from the high sample complexity problem \cite{schulman2015trust}.

Deterministic value gradient methods \cite{werbos1990menu,nguyen1990neural,jordan1992forward,fairbank2008reinforcement} compute the policy gradient through back propagation of the reward along a trajectory predicted by the learned model, which enables better sample efficiency.  
However, to the best of our knowledge, existing works of deterministic value gradient methods merely focus on finite horizon, which are too myopic and can lead to large bias.
Stochastic value gradient (SVG) methods \cite{heess2015learning}  use the re-parameterization technique to optimize the stochastic policies. 
Among the class of SVG algorithms, although SVG($1$) studies infinite-horizon problems, it only uses one-step rollout, which limits its efficiency.
Also, it suffers from the high variance due to the importance sampling ratio and the randomness of the policy. 

In this paper, we study the setting with infinite horizon, where both state transitions and policies are deterministic. 
\cite{heess2015learning} gives recursive Bellman gradient equations of deterministic value gradients, but the gradient lacks of theoretical guarantee as the DPG theorem does not hold in this deterministic transition case.  We prove that the gradient indeed exists for a certain set of discount factors. We then derive a closed form of the value gradients.

However, the estimation of the deterministic value gradients is much more challenging. The difficulty of the computation of the gradient mainly comes from the dependency of the gradient of the value function over the state. Such computation may involve infinite times of the product of the gradient of the transition function and is hard to converge. Thus, applying the Bellman gradient equation recursively may incur high instability.

To overcome these challenges, we use model-based approaches to predict the reward and transition function. Based on the theoretical guarantee of the closed form of the value gradients in the setting, we propose a class of deterministic value gradients DVG($k$) with infinite horizon, where $k$ denotes the number of rollout steps.  For each choice of $k$, we use the rewards predicted by the model and the action-value at $k+1$ step to estimate of the value gradients over the state, in order to reduce the instability of the gradient of the value function over the state. Different number of rollout steps maintains a trade-off between the accumulated model bias and the variance of the gradient over the state. The deterministic policy gradient estimator can be viewed as a special case of this class, i.e., it never use the model to estimate the value gradients, and we refer it to DVG($0$).

As the model-based approaches are more sample efficient than model-free algorithms \cite{li2004iterative,levine2013guided}, and the model-based deterministic value gradients may incur model bias \cite{wahlstrom2015pixels}, we consider an essential question:
{\em How to combines the model-based gradients and the model-free gradients efficiently? }

We propose a temporal difference method to ensemble gradients with different rollout steps. The intuition is to ensemble different gradient estimators with geometric decaying weights.
Based on this estimator, we propose the deterministic value-policy gradient (DVPG) algorithm. The algorithm updates the policy by stochastic gradient ascent with the ensembled value gradients of the policy, and the weight maintains a trade-off between sample efficiency and performance. 

To sum up, the main contribution of the paper is as follows:

\begin{itemize}
\item First of all, we provide a theoretical guarantee for the existence of the deterministic value gradients in settings with infinite horizon.

\item Secondly, we propose a novel algorithm that ensembles the deterministic value gradients and the deterministic policy gradients, called deterministic value-policy gradient (DVPG), which effectively combines the model-free and model-based methods. DVPG reduces sample complexity, enables faster convergence and performance improvement.

\item Finally, we conduct extensive experiments on standard benchmarks comparing with DDPG, DDPG with model-based rollouts, the stochastic value gradient algorithm, SVG($1$) and state-of-the-art stochastic policy gradient methods. Results confirm that DVPG significantly outperforms other algorithms in terms of both sample efficiency and performance.

\end{itemize}

\subsection{Related Work}
Model-based algorithms has been widely studied \cite{moldovan2015optimism,montgomery2016guided,ha2018recurrent,hafner2018learning,chua2018deep,zhang2018solar} in recent years. Model-based methods allows for more efficient computations and faster convergence than model-free methods \cite{wang2003model,li2004iterative,levine2013guided,watter2015embed}. 

There are two classes of model-based methods, one is to use learned model to do imagination rollouts to accelerate the learning.
\cite{gu2016continuous,kurutach2018model} generate synthetic samples by the learned model. PILCO \cite{deisenroth2011pilco} learns the transition model by Gaussian processes and applies policy improvement on analytic policy gradients. The other is to use learned model to get better estimates of action-value functions. The value prediction network (VPN) uses the learned transition model to get a better target estimate \cite{oh2017value}. \cite{feinberg2018model,buckman2018sample} combines different model-based value expansion functions by TD($k$) trick or stochastic distributions to improve the estimator of the action-value function. 
  
Different from previous model-based methods, we present a temporal difference method that ensembles model-based deterministic value gradients and model-free policy gradients. Our technique can be combined with both the imagination rollout technique and the model-based value expansion technique.

\section{Preliminaries}
A Markov decision process (MDP) is a tuple $(S, A, p, r, \gamma, p_0)$, where $\mathcal{S}$ and $\mathcal{A}$ denote the set of states and actions respectively.
$p(s_{t+1} | s_t, a_t)$ represents the conditional density from state $s_t$ to state $s_{t+1}$ under action $a_t$. The density of the initial state distribution is denoted by $p_0(s)$. At each time step $t$, the agent interacts with the environment with a deterministic policy $\mu_{\theta}$.
We use $r(s_t, a_t)$ to represent the immediate reward, contributing to the discounted overall rewards from state $s_0$ following $\mu_{\theta}$, denoted by $J(\mu_{\theta}) = \mathbb{E}[\sum_{k=0}^{\infty}{\gamma}^{k}r(a_k, s_k)|\mu_{\theta},s_0]$.
Here, $\gamma \in [0, 1)$ is the discount factor.
The Q-function of state $s_t$ and action $a_t$ under policy $\mu_{\theta}$ is denoted by $Q^{\mu_{\theta}}(s_t,a_t) =  \mathbb{E}[\sum_{k=t}^{\infty}{\gamma}^{k-t}r(a_k, s_k)|\mu_{\theta},s_t,a_t]$.
The corresponding value function of state $s_t$ under policy $\mu_{\theta}$ is denoted by $V^{\mu_{\theta}}(s_t)=Q^{\mu_{\theta}}(s_t,\mu_{\theta}(s_t))$.
We denote the density at state $s^{'}$ after $t$ time steps from state $s$ following the policy $\mu_{\theta}$ by $p(s,s^{'},t,\mu_{\theta})$ .
We denote the discounted state distribution by $\rho^{\mu_{\theta}}(s^{'})=\int_{\mathcal{S}}^{}\sum_{t=1}^{\infty}{\gamma}^{t-1}p_0(s)p(s,s^{'},t,\mu_{\theta})ds$.
The agent aims to find an optimal policy that maximizes $J(\mu_{\theta})$.

\section{Deterministic Value Gradients}
In this section, we study a setting of infinite horizon with deterministic state transition, which poses challenges for the existence of deterministic value gradients.
We first prove that under proper condition, the deterministic value gradient does exist. 
Based on the theoretical guarantee, we then propose a class of practical algorithms by rolling out different number of steps.
Finally, we discuss the difference and connection between our proposed algorithms and existing works.

Deterministic Policy Gradient (DPG) Theorem \cite{silver2014deterministic}, proves the existence of the deterministic policy gradient for MDP that satisfies the regular condition, which requires the probability density of the next state $p(s^{'}|s,a)$ to be differentiable in $a$. In the proof of the DPG theorem,  the existence of the gradient of the value function is firstly proven, i.e., 
\begin{equation}
\begin{split}
\nabla_{\theta}V^{\mu_{\theta}}(s)=&\int_{\mathcal{S}}\sum_{t=0}^{\infty}{\gamma}^{t}p(s,s',t,\mu_{\theta})\nabla_{\theta}\mu_{\theta}(s')\\
&\nabla_{a'}Q^{\mu_{\theta}}(s',a')|_{a'=\mu_{\theta}(s')}ds',
\end{split}
\end{equation}
then the gradient of the long-terms rewards exists. Without this condition, the arguments in the proof of the DPG theorem do not work \footnote{Readers can refer to http://proceedings.mlr.press/v32/silver14-supp.pdf}, and poses challenges for cases where the differentiability is not satisfied. Note this condition does not hold in any case with deterministic transitions. 
Therefore, one must need a new theoretical guarantee to determine the existence of the gradient of $V^{\mu_{\theta}}(s)$ over $\theta$ in deterministic state transition cases.

\subsection{Deterministic value gradient theorem}
We now analyze the gradient of a deterministic policy.
Denote $T(s,a)$ the next state given current state $s$ and action $a$. Without loss of generality, we assume that the transition function $T$ 
is continuous, differentiable in $s$ and $a$ and is bounded. Note that the regular condition is not equivalent to this assumption. Consider a simple example that a transition $T(s,a)=s+a$, then the gradient of $p(s'|s,a)$ over $a$ is infinite or does not exist. However, the gradient of $T(s,a)$ over $a$ exists. By definition, 

\begin{equation*}
\begin{split}
\nabla_{\theta} V^{\mu_{\theta}}(s) = &\nabla_{\theta} \left( r \left( s,\mu_{\theta}(s) \right)+\gamma  V^{\mu_{\theta}}(s^{'})|_{s^{'}=T(s,\mu_{\theta}(s))} \right)\\
=& \nabla_{\theta} r(s,\mu_{\theta}(s))+\gamma  \nabla_{\theta}V^{\mu_{\theta}}(s^{'})|_{s^{'}=T(s,\mu_{\theta}(s))}\\
&+ \gamma  \nabla_{\theta} T(s,\mu_{\theta}(s)) \nabla_{s'} V^{\mu_{\theta}}(s^{'}).
\end{split}
\end{equation*}

Therefore, the key of the existence (estimation) of the gradient of $V^{\mu_{\theta}}(s)$ over $\theta$ is the existence (estimation) of $\nabla_{s}V^{\mu_{\theta}}(s)$. 
In Theorem \ref{theorem1}, we give a sufficient condition of the existence of $\nabla_{s}V^{\mu_{\theta}}(s)$. 

\begin{theorem}
\label{theorem1}
For any policy $\mu_{\theta}$,  the gradient of the value function over the state, $\nabla_{s}V^{\mu_{\theta}}(s)$, exists with two assumptions:

\begin{itemize}
\item A.1: The set of states that the policy visits starting from any initial state $s$ is finite.
\item A.2: For any initial state $s$, by Assumption A.1, we get that there is a periodic loop of visited states.
Let $(s_0,s_1,...,s_k)$ denote the loop, and $A(s)={\gamma}^{k+1} \prod_{i=0}^{k}\nabla_{s_i}T(s_i,\mu_{\theta}(s_i))$, the power sum of $A(s)$, $\sum_{m=0}^{\infty}{A^{m}(s)}$ converges.
\end{itemize}
\end{theorem}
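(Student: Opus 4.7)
The plan is to unfold the Bellman gradient recursion along the deterministic trajectory and exploit the eventual periodicity guaranteed by A.1 to reduce the infinite series to a matrix geometric series whose convergence is precisely what A.2 postulates.

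First I would start from the Bellman equation $V^{\mu_\theta}(s) = r(s,\mu_\theta(s)) + \gamma V^{\mu_\theta}(T(s,\mu_\theta(s)))$ and differentiate with respect to $s$ (treating $\theta$ as fixed), obtaining the one-step recursion
\begin{equation*}
\nabla_s V^{\mu_\theta}(s) \;=\; \nabla_s r(s,\mu_\theta(s)) \;+\; \gamma\,\nabla_s T(s,\mu_\theta(s))\,\nabla_{s'} V^{\mu_\theta}(s')\big|_{s'=T(s,\mu_\theta(s))},
\end{equation*}
which is well-defined termwise because $r$, $T$ and $\mu_\theta$ are differentiable. Iterating this recursion $N$ times along the deterministic trajectory $s_0=s,\ s_1=T(s_0,\mu_\theta(s_0)),\ s_2=T(s_1,\mu_\theta(s_1)),\ldots$ yields a partial sum of ``reward-gradient'' terms weighted by matrix products $\gamma^{t}\prod_{i=0}^{t-1}\nabla_{s_i}T(s_i,\mu_\theta(s_i))$, plus a remainder term involving $\nabla_{s_N}V^{\mu_\theta}(s_N)$. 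The goal is to show the partial sums converge as $N\to\infty$ and the remainder vanishes.

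Next I would invoke A.1: since the trajectory visits only finitely many states, it must eventually enter a cycle of length $k+1$, say starting at index $t_0$, with loop states $(s_{t_0},\ldots,s_{t_0+k})$. Split the unfolded series into a finite pre-loop part (indices $0$ to $t_0-1$) plus an infinite tail over the loop. Inside the loop the transition Jacobians repeat with period $k+1$, so the tail naturally factors: the contribution of the $m$-th traversal of the loop is $A(s)^m$ times the contribution of the first traversal, where $A(s)=\gamma^{k+1}\prod_{i=0}^{k}\nabla_{s_i}T(s_i,\mu_\theta(s_i))$ is precisely the matrix from A.2. Thus the tail collapses to $\bigl(\sum_{m=0}^{\infty}A(s)^m\bigr)$ acting on a single finite block of reward-gradient terms, and convergence of this Neumann-type series is exactly assumption A.2. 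Convergence of $\sum A(s)^m$ also forces $A(s)^m\to 0$, which kills the remainder term $\gamma^{N}\bigl(\prod_{i=0}^{N-1}\nabla_{s_i}T\bigr)\nabla_{s_N}V^{\mu_\theta}(s_N)$ since $V^{\mu_\theta}$ is bounded under our standing assumptions (bounded $T$, bounded rewards along the finite orbit, $\gamma<1$).

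Once $\nabla_s V^{\mu_\theta}(s)$ is shown to exist as this convergent closed-form series, I would plug it into the identity already displayed in the excerpt,
\begin{equation*}
\nabla_\theta V^{\mu_\theta}(s) \;=\; \nabla_\theta r(s,\mu_\theta(s)) \;+\; \gamma\,\nabla_\theta V^{\mu_\theta}(s')\big|_{s'=T(s,\mu_\theta(s))} \;+\; \gamma\,\nabla_\theta T(s,\mu_\theta(s))\,\nabla_{s'} V^{\mu_\theta}(s'),
\end{equation*}
and iterate the same unfolding-along-the-orbit argument, now with bounded additional factors $\nabla_\theta r$ and $\nabla_\theta T$ at each step, to conclude that $\nabla_\theta V^{\mu_\theta}(s)$ also exists.

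The main obstacle I expect is the rigorous handling of the tail: one must (i) justify interchanging the differentiation with the infinite Bellman unfolding, (ii) show that the remainder $\gamma^{N}\prod\nabla_{s_i}T\cdot\nabla_{s_N}V^{\mu_\theta}(s_N)$ tends to zero rather than just being formally absorbed, and (iii) package the loop contributions into the Neumann series $\sum_m A(s)^m$ in a way that legitimately separates the ``within one period'' block from the repeated operator. A.2 is tailor-made for step (iii), and combined with A.1 it supplies enough control (operator norm of $A(s)^m\to 0$ together with finiteness of the orbit) to push (i) and (ii) through by a standard dominated-convergence / Cauchy-in-partial-sums argument.
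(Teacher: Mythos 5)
Your proposal is correct and follows essentially the same route as the paper's proof: differentiate the Bellman equation in $s$, unroll the resulting recursion along the deterministic trajectory, use A.1 to reduce the tail to repeated traversals of the loop so that it factors as the Neumann series $\sum_{m}A(s)^{m}$, and invoke A.2 for convergence (the paper merely groups the terms by visited state rather than by loop traversal, which is a cosmetic difference). Your explicit attention to the vanishing of the $N$-step remainder term is a point the paper glosses over, and does not change the argument.
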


\begin{proof}
By definition,
\begin{equation}
V^{\mu_{\theta}}(s)=r(s,\mu_{\theta}(s))+\gamma V^{\mu_{\theta}}(s^{'})|_{s^{'}=T(s,\mu_{\theta}(s))}. 
\label{eq: v_def}
\end{equation}
Taking the gradient of Eq. (\ref{eq: v_def}), we obtain
\begin{equation}
\label{v_s}
\begin{split}
\bigtriangledown_{s}V^{\mu_{\theta}}(s)=&\bigtriangledown_{s}r(s,\mu_{\theta}(s)) \\
+&\gamma \bigtriangledown_{s}T(s,\mu_{\theta}(s)) \bigtriangledown_{s^{'}}V^{\mu_{\theta}}(s^{'})|_{s^{'}=T(s,\mu_{\theta}(s))}.
\end{split}
\end{equation}
Unrolling Eq. (\ref{v_s}) with infinite steps, we get

\begin{equation}
\label{sum_of_series}
\begin{split}
\bigtriangledown_{s}V^{\mu_{\theta}}(s)=&\sum_{t=0}^{\infty}{\gamma}^{t}g(s,t,\mu_{\theta})\bigtriangledown_{s_t}r(s_t,\mu_{\theta}(s_t)),
\end{split}
\end{equation}
where $g(s,t,\mu_{\theta})=\prod_{i=0}^{t-1}\bigtriangledown_{s_i}T(s_i,\mu_{\theta}(s_i)),$ $s_0=s$ and $s_i$ is the state after $i$ steps following policy $\mu_{\theta}$. 

With the assumption A.1, we rewrite (\ref{sum_of_series}) by the indicator function  $I(s,s^{'},t,\mu_{\theta})$ that indicates whether $s^{'}$ is obtained after $t$ steps from the initial state $s$ following the policy $\mu_{\theta}$:
\begin{equation}
\begin{split}
\bigtriangledown_{s}V^{\mu_{\theta}}(s)=&\sum_{t=0}^{\infty}\sum_{s'\in B(s,\theta)}^{}{\gamma}^{t}g(s,t,\mu_{\theta})I(s,s^{'},t,\mu_{\theta})\\
&\bigtriangledown_{s^{'}}r(s^{'},\mu_{\theta}(s^{'})),
\end{split}
\end{equation}

Where $B(s,\theta)$ is the set of states the policy visits from $s$.

We now prove that for any $\mu_{\theta},s,s^{'}$, the infinite sum of gradients, $\sum_{t=0}^{\infty}{\gamma}^{t}g(s,t,\mu_{\theta})I(s,s^{'},t,\mu_{\theta})$ converges.

For each state $s'$, there are three cases during the process from the initial state $s$ with infinite steps:
\begin{enumerate}
\item Never visited: $\sum_{t=0}^{\infty}{\gamma}^{t}g(s,t,\mu_{\theta})I(s,s^{'},t,\mu_{\theta})=\mathbf{0}.$
\item Visited once: Let $t_{s'}$ denote the number of steps that it takes to reach the state $s'$, then 
$\sum_{t=0}^{\infty}{\gamma}^{t}g(s,t,\mu_{\theta})I(s,s^{'},t,\mu_{\theta})={\gamma}^{t_{s^{'}}}g(s,t_{s^{'}},\mu_{\theta}).$
\item Visited infinite times: Let $t_1$ denote the number of steps it takes to reach $s'$ for the first time. 
The state $s'$ will be revisited every $k$ steps after the previous visit.  By definition,

\begin{equation}
\begin{split}
\label{sum}
\sum_{t=0}^{\infty}{\gamma}^{t}g(s,t,\mu_{\theta})I(s,s^{'},t,\mu_{\theta}) 
=\sum_{a=0}^{\infty}{\gamma}^{t_1}g(s,t_1,\mu_{\theta}){A^{a}(s)}.
\end{split}
\end{equation}

By the assumption A.2 we get (\ref{sum}) converges. 
\end{enumerate}

By exchanging the order of the limit and the summation,
\begin{equation}
\begin{split}
\bigtriangledown_{s}V^{\mu_{\theta}}(s)=&\sum_{s'\in B(s,\theta)}^{}\sum_{t=0}^{\infty}{\gamma}^{t}g(s,t,\mu_{\theta})I(s,s^{'},t,\mu_{\theta})\\
&\bigtriangledown_{s^{'}}r(s^{'},\mu_{\theta}(s^{'})).
\end{split}
\end{equation}

\end{proof}

Assumption A.1 guarantees the existence of the stationary distribution of states theoretically. 
Actually, it holds on most continuous tasks, e.g., InvertedPendulum-v2 in MuJoCo. 
We directly test a deterministic policy with  a 2-layer fully connected network on this environment with 10,000 episodes\footnote{We test different weights, the observation of finite visited states set is very common among different weights.}, and we count the number that each state is visited. 
After projecting the data into 2D space by t-SNE \cite{maaten2008visualizing}, we obtain the state visitation density countour as shown in Figure \ref{fig: therodynamics}. 
We have two interesting findings: (1) The set of states visited by the policy is finite. (2) Many states are visited for multiple times, which justifies Assumption A.1.

By the analysis of Assumption A.2, we get that for any policy and state, there exists a set of discount factors such that the the gradient of the value function over the state exists, as illustrated in Corollary \ref{cor1}. Please refer to Appendix A for the proof.

\begin{corollary}
For any policy $\mu_{\theta}$ and any initial state $s$, let $(s_0,s_1,...,s_k)$ denote the loop of states following the policy and the state, $C(s,\mu_{\theta},k)=\prod_{i=0}^{k}\nabla_{s_i}T(s_i,\mu_{\theta}(s_i))$, the gradient of the value function over the state, $\nabla_{s}V^{\mu_{\theta}}(s)$ exists if 
${\gamma}^{k+1}\max\left\{||C(s,\mu_{\theta},k)||_{\infty}, ||C(s,\mu_{\theta},k)||_{1}\right\}<1.$
\label{cor1}
\end{corollary}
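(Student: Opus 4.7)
The plan is to derive this corollary directly from Theorem~\ref{theorem1} by verifying that its Assumption A.2 is implied by the stated hypothesis. Since Theorem~\ref{theorem1} already guarantees the existence of $\nabla_{s}V^{\mu_{\theta}}(s)$ once the matrix power series $\sum_{m=0}^{\infty} A^{m}(s)$ converges, where $A(s) = \gamma^{k+1} C(s,\mu_{\theta},k)$, the corollary reduces to a purely linear-algebraic statement about the Neumann series of the real square matrix $A(s)$.

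First, I would invoke the standard fact from matrix analysis that for any square matrix $M$, the Neumann series $\sum_{m=0}^{\infty} M^{m}$ converges (to $(I-M)^{-1}$) if and only if the spectral radius satisfies $\rho(M)<1$, together with the bound $\rho(M)\leq \|M\|$ valid for any submultiplicative matrix norm — in particular for the induced $\ell_{\infty}$ and $\ell_{1}$ norms. Applying these two facts to $M=A(s)$, and using the positive homogeneity of the norms to write $\|A(s)\|_{\infty}=\gamma^{k+1}\|C(s,\mu_{\theta},k)\|_{\infty}$ and similarly for $\|\cdot\|_{1}$, the hypothesis
\[
\gamma^{k+1}\max\bigl\{\|C(s,\mu_{\theta},k)\|_{\infty},\|C(s,\mu_{\theta},k)\|_{1}\bigr\}<1
\]
immediately forces $\|A(s)\|_{\infty}<1$ (and also $\|A(s)\|_{1}<1$). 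Either bound alone gives $\rho(A(s))<1$, so $\sum_{m=0}^{\infty}A^{m}(s)$ converges, Assumption A.2 holds, and Theorem~\ref{theorem1} yields the existence of $\nabla_{s}V^{\mu_{\theta}}(s)$.

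The main (and essentially the only) subtlety is recognizing that the corollary is tacitly operating inside the setting where Assumption A.1 already holds: the phrase ``let $(s_{0},s_{1},\dots,s_{k})$ denote the loop of states'' presupposes a finite periodic loop, which is exactly what ensures $C(s,\mu_{\theta},k)$ is a well-defined finite product of Jacobians of $T$ and hence a finite-dimensional matrix on which spectral-radius arguments make sense. Given this, the proof is a one-line reduction — no recursion, no exchange-of-limits subtleties beyond what Theorem~\ref{theorem1} has already handled — so I would expect the write-up to be quite short and to live naturally in the appendix as indicated in the text.
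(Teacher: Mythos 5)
Your proposal is correct and follows essentially the same route as the paper: reduce the corollary to verifying Assumption A.2 for $A(s)=\gamma^{k+1}C(s,\mu_{\theta},k)$, bound the spectral radius by the matrix norms to get $\rho(A(s))<1$, and conclude convergence of $\sum_{m=0}^{\infty}A^{m}(s)$. The only cosmetic difference is that the paper cites Farnell (1944) for the eigenvalue bound and spells out the Jordan-normal-form argument for the convergence of the matrix power series, whereas you invoke the standard Neumann-series criterion directly; your observation that either norm bound alone already suffices is a fair (minor) sharpening of the stated hypothesis.
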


\begin{figure}
    \centering
            \includegraphics[scale=0.26]{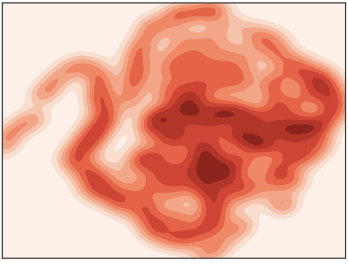}
    \caption{State visitation density countour on InvertedPendulum-v2.}
    \label{fig: therodynamics}
\end{figure}

In Theorem \ref{theorem_deter}, we show that the deterministic value gradients exist and obtain the closed form based on the analysis in Theorem \ref{theorem1}. Please refer to Appendix B for the proof. 

\begin{theorem} {\bf{(Deterministic Value Gradient Theorem)}}
\label{theorem_deter}
For any policy $\mu_{\theta}$ and MDP with deterministic state transitions, if assumptions A.1 and A.2 hold, the value gradients exist, and
\begin{equation*}
\label{close_j}
\begin{split}
\nabla_{\theta}V^{\mu_{\theta}}(s)=&\sum_{s'\in B(s,\theta)}^{}\rho^{\mu_{\theta}}(s,s')\nabla_{\theta}\mu_{\theta}(s')(\nabla_{a'}r(s',a')
+\\
&\gamma  \nabla_{a'} T(s',a') \nabla_{s^{''}}V^{\mu_{\theta}}(s^{''})|_{s^{''}=T(s',a')}),
\end{split}
\end{equation*}
where $ a'$ is the action the policy takes at state $s'$, $\rho^{\mu_{\theta}}(s,s')$ is the discounted state distribution starting from the state $s$ and the policy, and is defined as
$
\rho^{\mu_{\theta}}(s,s')=\sum_{t=1}^{\infty}{\gamma}^{t-1}I(s,s',t,\mu_{\theta}).$
\end{theorem}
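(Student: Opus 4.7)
The plan is to follow the standard DPG template: differentiate the Bellman equation, unroll along the deterministic trajectory induced by $\mu_{\theta}$, and regroup the time-indexed contributions by visited state, using Theorem \ref{theorem1} to ensure that every $\nabla_{s}V^{\mu_{\theta}}$ that appears is a legitimate vector and not merely a formal expression. The essential structural change from Silver et al.\ is that there is no conditional density $p(s'|s,a)$ to differentiate, so the ``state distribution'' collapses from a density into the indicator-based object $\rho^{\mu_{\theta}}(s,s')$, which is made meaningful by the finite-visitation assumption A.1.

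First, starting from $V^{\mu_{\theta}}(s) = r(s,\mu_{\theta}(s)) + \gamma V^{\mu_{\theta}}(T(s,\mu_{\theta}(s)))$ and applying the chain rule (exactly as displayed just before Theorem \ref{theorem1}), I would split $\nabla_{\theta} V^{\mu_{\theta}}(s)$ into a local term $\nabla_{\theta}\mu_{\theta}(s)\bigl[\nabla_{a} r(s,a) + \gamma\,\nabla_{a} T(s,a)\,\nabla_{s''}V^{\mu_{\theta}}(s'')|_{s''=T(s,a)}\bigr]_{a=\mu_{\theta}(s)}$ and a propagated term $\gamma\,\nabla_{\theta} V^{\mu_{\theta}}(s')|_{s'=T(s,\mu_{\theta}(s))}$. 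Iterating this one-step recursion along the deterministic trajectory $s_{0}=s, s_{1}, s_{2}, \ldots$ yields
\begin{equation*}
\nabla_{\theta} V^{\mu_{\theta}}(s) = \sum_{t=0}^{\infty} \gamma^{t}\,\nabla_{\theta}\mu_{\theta}(s_{t})\bigl[\nabla_{a} r(s_{t},a) + \gamma\,\nabla_{a} T(s_{t},a)\,\nabla_{s''}V^{\mu_{\theta}}(s'')|_{s''=T(s_{t},a)}\bigr]_{a=\mu_{\theta}(s_{t})}.
\end{equation*}
Then, using A.1 to rewrite $s_{t}$-indexed contributions via the indicator $I(s,s',t,\mu_{\theta})$ over the finite set $B(s,\theta)$, and exchanging the order of summation, each state $s'\in B(s,\theta)$ receives the scalar weight $\sum_{t\ge 0}\gamma^{t} I(s,s',t,\mu_{\theta})$. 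After the index shift matching the paper's convention $\rho^{\mu_{\theta}}(s,s') = \sum_{t=1}^{\infty}\gamma^{t-1}I(s,s',t,\mu_{\theta})$, this weight is precisely $\rho^{\mu_{\theta}}(s,s')$, and reading off the coefficient of each $\nabla_{\theta}\mu_{\theta}(s')$ gives the claimed closed form.

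The main obstacle is the rigorous justification of the rearrangement: one must verify that each per-state coefficient $\sum_{t\ge 0}\gamma^{t} I(s,s',t,\mu_{\theta})$ converges and that the double sum over $t$ and $s'$ can be legitimately reordered. This is where the proof machinery of Theorem \ref{theorem1} is reused almost verbatim: the same three-case dichotomy (never visited, visited once, visited infinitely often along the periodic loop) applies, and in the only non-trivial case convergence reduces to the power series $\sum_{m}A^{m}(s)$ provided by A.2. Because $B(s,\theta)$ is finite by A.1, only finitely many such series need to be controlled, so a Fubini-style exchange is valid and the theorem drops out.
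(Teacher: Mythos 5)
Your proposal follows essentially the same route as the paper's own proof: differentiate the Bellman identity $V^{\mu_{\theta}}(s)=r(s,\mu_{\theta}(s))+\gamma V^{\mu_{\theta}}(T(s,\mu_{\theta}(s)))$ into a local term plus a propagated $\gamma\,\nabla_{\theta}V^{\mu_{\theta}}(s_{1})$ term, unroll along the deterministic trajectory, and regroup the time-indexed contributions by visited state via the indicator $I(s,s',t,\mu_{\theta})$ to obtain the weights $\rho^{\mu_{\theta}}(s,s')$. Your added care about justifying the reordering (finiteness of $B(s,\theta)$ from A.1 plus the convergence machinery of Theorem \ref{theorem1} for the $\nabla_{s''}V^{\mu_{\theta}}$ factors) is consistent with, and if anything slightly more explicit than, what the paper does.
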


\subsection{Deterministic value gradient algorithm}
The value gradient methods estimate the gradient of value function recursively \cite{fairbank2012value}:

\begin{equation}
\begin{split}
\nabla_{\theta} V^{\mu_{\theta}}(s) =& \nabla_{\theta} r(s, \mu_{\theta}(s)) + \gamma \nabla_{\theta} T(s,\mu_{\theta}(s)) \nabla_{s'}V^{\mu_{\theta}}(s')\\
 &+ \gamma \nabla_{\theta} V^{\mu_{\theta}}(s')
\end{split}
\label{policy graident}
\end{equation}

\begin{equation}
\label{v_snew}
\begin{split}
\bigtriangledown_{s}V^{\mu_{\theta}}(s)=&\bigtriangledown_{s}r(s,\mu_{\theta}(s))+\gamma \bigtriangledown_{s}T(s,\mu_{\theta}(s))\\
&\bigtriangledown_{s^{'}}V^{\mu_{\theta}}(s^{'})|_{s^{'}=T(s,\mu_{\theta}(s))}.
\end{split}
\end{equation}

In fact, there are two kinds of approaches for estimating the gradient of the value function over the state, i.e., infinite and finite.
On the one hand, directly estimating the gradient of the value function over the state recursively by Eq. (\ref{v_snew}) for infinite times is slow to converge.
On the other hand, estimating the gradient by finite horizon like traditional value gradient methods \cite{werbos1990menu,nguyen1990neural,heess2015learning} may cause large bias of the gradient.

We set out to estimate the action-value function denoted by $Q^{w}(s,a)$ with parameter $w$,  and replace $\nabla_{s}V^{\mu_{\theta}}(s)$ by $ \nabla_{s}Q^{w}(s,\mu_{\theta}(s))$ in Eq. (\ref{policy graident}). 
In this way, we can directly obtain a 1-step estimator of the value gradients, 
\begin{equation}
\label{DVG(1)}
\begin{split}
G_1(\mu_{\theta},s)=& \nabla_{\theta} r(s,\mu_{\theta}(s)) + \gamma  \nabla_{\theta} T(s,\mu_{\theta}(s))\\
 &\nabla_{s_{1}} Q^{w}(s_1,\mu_{\theta}(s_1)) + \gamma  G_1(\mu_{\theta},s_{1}),
\end{split}
\end{equation}

where $s_1$ is the next state of $s$, which can be generalized to $k (k\geq 2)$ rollout steps.  Let $s_i$ denote the state visited by the policy at the $i$-th step starting form the initial state $s_0$, $g(s,t,\mu_{\theta},T)=\prod_{i=1}^{t-1}\nabla_{s_i}T(s_i,\mu_{\theta}(s_i)).$  We choose to rollout $k-1$ steps to get rewards, then replace $\nabla_{s_k}V^{\mu_{\theta}}(s_k)$ by $ \nabla_{s_k}Q^{w}(s_k,\mu_{\theta}(s_k))$ in Eq. (\ref{v_snew}), and we get

\begin{equation*}
\label{DVG(value_state)}
\begin{split}
L_k(\mu_{\theta},s,r,T) =&\sum_{t=1}^{k-1}{\gamma}^{t-1}g(s,t,\mu_{\theta},T)\nabla_{s_t}r(s_t,\mu_{\theta}(s_t))\\
&+ {\gamma}^{k-1} g(s,k,\mu_{\theta},T)\nabla_{s_{k}} Q^{w}(s_{k},\mu_{\theta}(s_{k})).
\end{split}
\end{equation*}

Replacing $\nabla_{s'}V^{\mu_{\theta}}(s')$ with $L_k(\mu_{\theta},s,r,T)$ in Eq. (\ref{policy graident}), we get a $k$-step estimator of the value gradients:

\begin{equation}
\label{DVG(k)}
\begin{split}
G_k(\mu_{\theta},s)=& \nabla_{\theta}r(s,\mu_{\theta}(s))+  \gamma  \nabla_{\theta} T(s,\mu_{\theta}(s)\\
& L_k(\mu_{\theta},s,r,T) + \gamma  G_k(\mu_{\theta},s_1).
\end{split}
\end{equation}

It is easy to see that $G_k(\mu_{\theta},s)$ and $G_1(\mu_{\theta},s)$ are the same if we have the true reward and transition functions, which is generally not the case as we need to learn the model in practical environments.
Let  $D_k(\mu_{\theta},s, T', r')$ denote the value gradient at the sampled state $s$ with $k$ rollout steps, on learned transition function $T'$ and reward function $r'$, which is defined as:
\begin{equation}
\label{DVG(gradient single)}
\begin{split}
 D_k(\mu_{\theta},s, T', r') =& \nabla_{\theta}r'(s,\mu_{\theta}(s))+  \gamma  \nabla_{\theta} T'(s,\mu_{\theta}(s))\\
 &L_k(\mu_{\theta},s,r',T').
\end{split}
\end{equation}
Based on Eq.(\ref{DVG(gradient single)}), we propose the deterministic value gradients with infinite horizon, where the algorithm is shown in Algorithm \ref{alg:DVG}:
given $n$ samples $(s_j,a_j,r_j,s_{j+1})$, for each choice of $k$, 
we use 
$\frac{1}{n}\sum_{j}^{} D_k(\mu_{\theta},s_j, T', r')$
to update the current policy. We use sample-based methods to estimate the deterministic value gradients. For each state in the trajectory, we take the analytic gradients by the learned model. 
As the model is not given, we choose to predict the reward function and the transition function.
We choose to use experience replay to compare with the DDPG algorithm fairly.
Different choices of the number of rollout steps trade-off between the variance and the bias.
Larger steps means lower variance of the value gradients, and larger bias due to the accumulated model error.

\begin{algorithm}
\small
  \caption{The DVG($k$) algorithm}
  \begin{algorithmic}[1]
   \STATE Initialize the reward network $r'$, transition network $T'$, critic network $Q$, actor network $\mu_{\theta}$, target networks $Q'$, $\mu_{\theta}'$ and experience replay buffer $\mathcal{B}$\\
  \FOR{episode$=0, ..., N-1$} 

    \FOR{$t=1, ..., T$} 
	\STATE	Select action according to the current policy and exploration noise\\
	\STATE	Execute action $a_t$, observe reward $r_t$ and new state $s_{t+1}$, and store transition $(s_t,a_t,r_t,s_{t+1})$ in $\mathcal{B}$\\
	\STATE	Sample a random minibatch of $n$ transitions from $\mathcal{B}$\\
	\STATE	Update the critic $Q$ by minimizing the TD error: $\frac{1}{n}\sum_{j}^{n}{(r_j+\gamma Q'(s_{j+1},\mu_{\theta}(s_{j+1})) - Q(s_j,a_j))}^{2}$\\
	\STATE	Update the reward network $r'$ and the transition network $T'$ on the batch by minimizing the square loss\\
	\STATE Estimate the value gradients by $\frac{1}{n}\sum_{j}^{} D_k(\mu_{\theta},s_j, T',r')$  and perform gradient update on the policy\\
    \STATE Update the target networks by ${\theta}_{}^{Q^{'}}=\tau {\theta}_{}^{Q}+(1-\tau) {\theta}_{}^{Q^{'}}; {\theta}^{\mu^{'}}=\tau {\theta}^{\mu}+(1-\tau) {\theta}^{\mu^{'}}$\\
    \ENDFOR
  \ENDFOR
\end{algorithmic}
  \label{alg:DVG}
\end{algorithm}

\subsection{The difference between infinite and finite horizon}
In this section, we discuss the advantage of our proposed DVG algorithm over finite horizon and validate the effect on a continuous control task. 
The estimator of deterministic value gradients with finite horizon, DVG$_{\text F}$, is defined as \cite{fairbank2012value}:

\begin{equation*}
\label{DVG_Finite}
\begin{split}
F_k(\mu_{\theta},s)=& \nabla_{\theta}r(s,\mu_{\theta}(s))+  \gamma  \nabla_{\theta} T(s,\mu_{\theta}(s)) \sum_{t=1}^{k-1}{\gamma}^{t-1}\\
&g(s,t,\mu_{\theta},T)\nabla_{s_t}r(s_t,\mu_{\theta}(s_t))+ \gamma  F_k(\mu_{\theta},s_1).
\end{split}
\end{equation*}

Note that $F_k(\mu_{\theta},s)$ does not take rewards after the $k$-th step into consideration. 
Therefore, given $n$ samples $\{(s_j,a_j,r_j,s_{j+1})\}$, DVG$_{\text F}$ uses the sample mean of $D'_k(\mu_{\theta},s, T', r')$ to update the policy, where $D'_k(\mu_{\theta},s, T', r')$ is defined as:
\begin{equation*}
\label{DVG inf (gradient single)}
\begin{split}
 D'_k(\mu_{\theta},s, T', r') = &\nabla_{\theta}r'(s,\mu_{\theta}(s))+  \gamma  \nabla_{\theta} T'(s,\mu_{\theta}(s))\\
 &\sum_{t=1}^{k-1}{\gamma}^{t-1}g(s,t,\mu_{\theta},T')\nabla_{s_t^{'}}r(s_t^{'},\mu_{\theta}(s_t^{'})).
\end{split}
\end{equation*}

We then test the two approaches on the environment HumanoidStandup-v2, where we choose the parameter $k$ to be $2$\footnote{For the choice of $k$, we test DVG$_{\text F}$ with steps ranging from 1 to 5, and we choose the parameter with the best performance for fair comparison.}.  As shown in Figure \ref{fig: finite}, DVG significantly outperforms DVG$_{\text F}$, which validates our claim that only considering finite horizon fails to achieve the same performance as that of infinite horizon.

\begin{figure}[H]
    \centering
            \includegraphics[scale=0.32]{./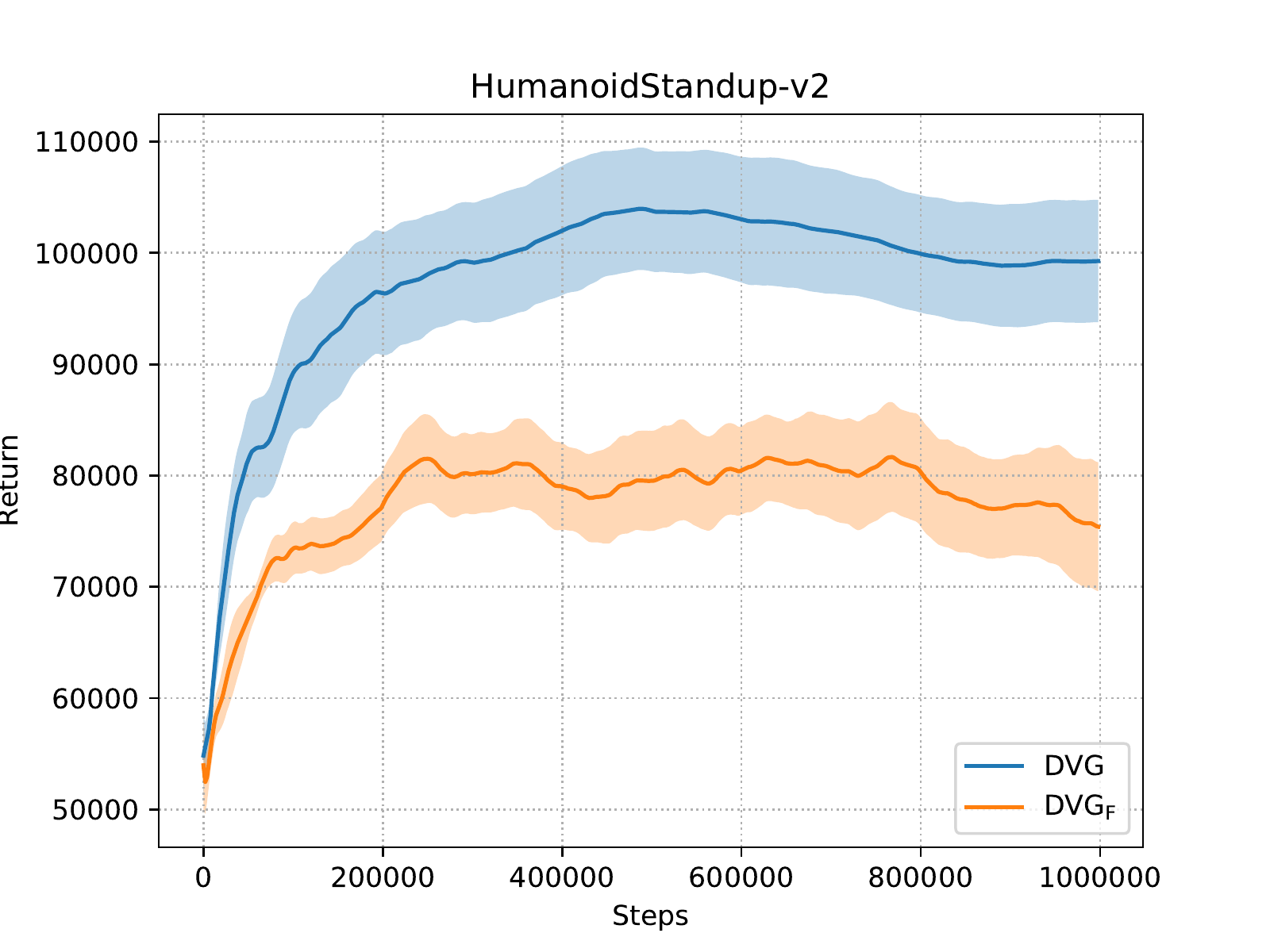}
    \caption{Comparisons of DVG and DVG$_{\text F}$.}
    \label{fig: finite}
\end{figure}

\begin{figure}[H]
    \centering
            \includegraphics[scale=0.32]{./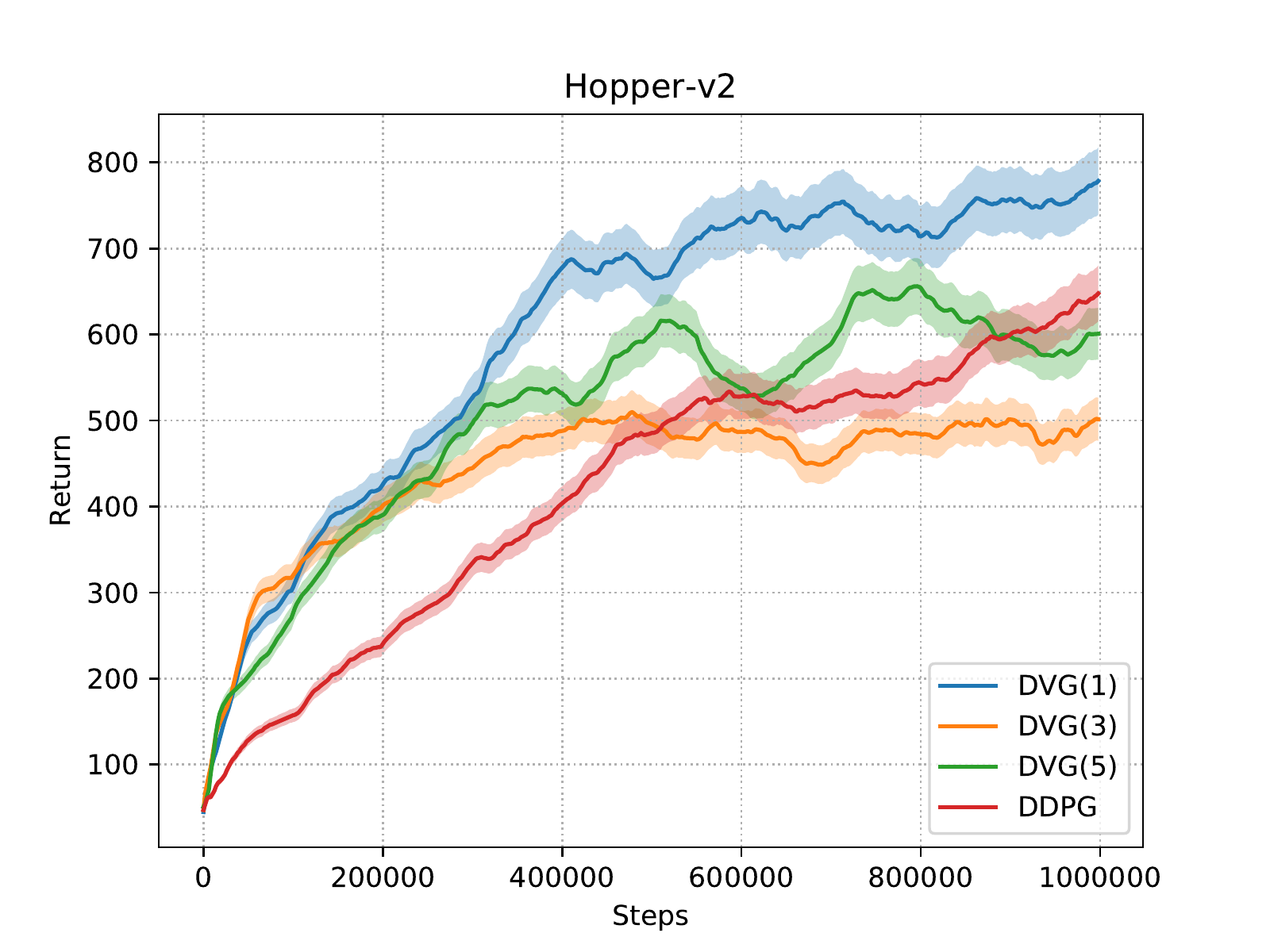}
    \caption{Comparisons of DVG with DDPG.}
    \label{fig: just}
\end{figure}

\subsection{Connection and comparison of DVG and DDPG}
By the proof of the DPG theorem in \cite{silver2014deterministic}, Eq. (\ref{policy graident}) can be re-written as
\begin{equation}
\label{DPG}
\begin{split}
\nabla_{\theta}V^{\mu_{\theta}}(s)= \nabla_{\theta} \mu_{\theta}(s)  \nabla_{a} Q^{\mu_{\theta}}(s,a) + \gamma  \nabla_{\theta}V^{\mu_{\theta}}(s_1).
\end{split}
\end{equation}

The DDPG algorithm uses the gradient of the estimator of the Q function over the action, $\nabla_{a}Q^{w}(s,a)$  to estimate $\nabla_{a} Q^{\mu_{\theta}}(s,a)$, i.e.,
$G_0(\mu_{\theta},s)= \nabla_{\theta} \mu_{\theta}(s)  \nabla_{a} Q^{w}(s,a) + \gamma G_0(\mu_{\theta},s_1).$

The DDPG algorithm is a model-free algorithm which does not predict the reward and the transition, and can be viewed as the DVG($0$) algorithm.
We compare the DVG algorithm with different rollout steps $k$ and DDPG on a continuous control task in MuJoCo, Hopper-v2. 
From Figure \ref{fig: just}, we get that DVG with any choice of the number of rollout steps is more sample efficient than DDPG, which validates the power of model-based techniques.
DVG($1$) outperforms DDPG and DVG with other number of rollout steps in terms of performance as it trades off well between the bias and the variance of the value gradients. 
Note that with a larger number of step, DVG(5) is not stable due to the propagated model error.

\section{The DVPG Algorithm}
As discussed before, the model-based DVG algorithm are more sample efficient than the model-free DDPG algorithm. However, it suffers from the model bias which results in performance loss. In this section, we consider to ensemble these different gradient estimators for better performance. 

Motivated by the idea of TD($\lambda$) algorithm \cite{sutton2018reinforcement}, which ensembles the TD($k$) error with a geometric decaying weight $\lambda$, we propose a temporal-difference method to ensemble DVG with varying rollout steps and the model-free deterministic policy gradients. We defined the temporal difference deterministic value gradients as 
$G_{\lambda,t}(\mu_{\theta},s)=(1-\lambda) \sum_{k=0}^{t} {\lambda}^{k} G_{k}(\mu_{\theta},s)$,
where $t$ denotes the maximal number of rollout steps by the learned model. For the gradient update rule, we also apply sample based methods: given $n$ samples $\{(s_j,a_j,r_j,s_{j+1})\}$, we use 

\begin{equation}
\label{TDDVG}
\begin{split}
&\frac{1}{n}\sum_{j}^{}( (1-\lambda) \nabla_{\theta} \mu_{\theta}(s_j)  \nabla_{a} Q^{w}(s_j,a) +
 (1-\lambda)\\
&\sum_{k=1}^{t} {\lambda}^{k} D_k(\mu_{\theta},s_j, T', r'))
\end{split}
\end{equation}
to update the policy. Based on this ensembled deterministic value-policy gradients, we propose the deterministic value-policy gradient algorithm, shown in Algorithm \ref{alg:DVPG} \footnote{The only difference between the DVG(k) algorithm and the DVPG algorithm is the update rule of the policy.}.

\begin{algorithm}
\small
  \caption{The DVPG algorithm}
  \begin{algorithmic}[1]
  \STATE Initialize the weight $\lambda$ and the number of maximal steps $t$\\
 \STATE Initialize the reward network $r'$, transition network $T'$, critic network $Q$, actor network $\mu_{\theta}$, target networks $Q'$, $\mu_{\theta}'$ and experience replay buffer $\mathcal{B}$\\
  \FOR{episode$=0, ..., N-1$} 
    \FOR{$t=1, ..., T$} 
	\STATE	Select action according to the current policy and exploration noise\\
	\STATE	Execute action $a_t$, observe reward $r_t$ and new state $s_{t+1}$, and store transition $(s_t,a_t,r_t,s_{t+1})$ in $\mathcal{B}$\\
	\STATE	Sample a random minibatch of $n$ transitions from $\mathcal{B}$\\
	\STATE	Update the critic $Q$ by minimizing the TD error: $\frac{1}{n}\sum_{j}^{n}{(r_j+\gamma Q'(s_{j+1},\mu_{\theta}(s_{j+1})) - Q(s_j,a_j))}^{2}$\\
	\STATE	Update the reward network $r'$ and the transition network $T'$ on the batch by minimizing the square loss\\
	\STATE Estimate the value gradients by Eq. (\ref{TDDVG}), and perform gradient update on the policy \\
    \STATE Update the target networks by ${\theta}_{}^{Q^{'}}=\tau {\theta}_{}^{Q}+(1-\tau) {\theta}_{}^{Q^{'}}; {\theta}^{\mu^{'}}=\tau {\theta}^{\mu}+(1-\tau) {\theta}^{\mu^{'}}$\\
    \ENDFOR
  \ENDFOR
\end{algorithmic}
  \label{alg:DVPG}
\end{algorithm}

\section{Experimental Results}

We design a series of experiments to evaluate DVG and DVPG. We investigate the following aspects:
(1) What is the effect of the discount factor on DVG?
(2) How sensitive is DVPG to the hyper-parameters?
(3) How does DVPG compare with state-of-the-art methods?

We evaluate DVPG in a number of continuous control benchmark tasks in OpenAI Gym based on the MuJoCo \cite{todorov2012mujoco} simulator.
The implementation details are referred to Appendix C. 
We compare DVPG with DDPG, DVG, DDPG with imagination rollouts (DDPG(model)), and SVG with 1 step rollout and experience replay (SVG($1$)) in the text. 
We also compare DVPG with methods using stochastic policies, e.g. ACKTR, TRPO, in Appendix D. 
We plot the averaged rewards of episodes over 5 different random seeds with the number of real samples, and the shade region represents the 75\% confidence interval. 
We choose the same hyperparameters of the actor and critic network for all algorithms. The prediction models of DVPG, DVG and DDPG(model) are the same. 

\begin{figure*}
    \centering
            \includegraphics[scale=0.245]{./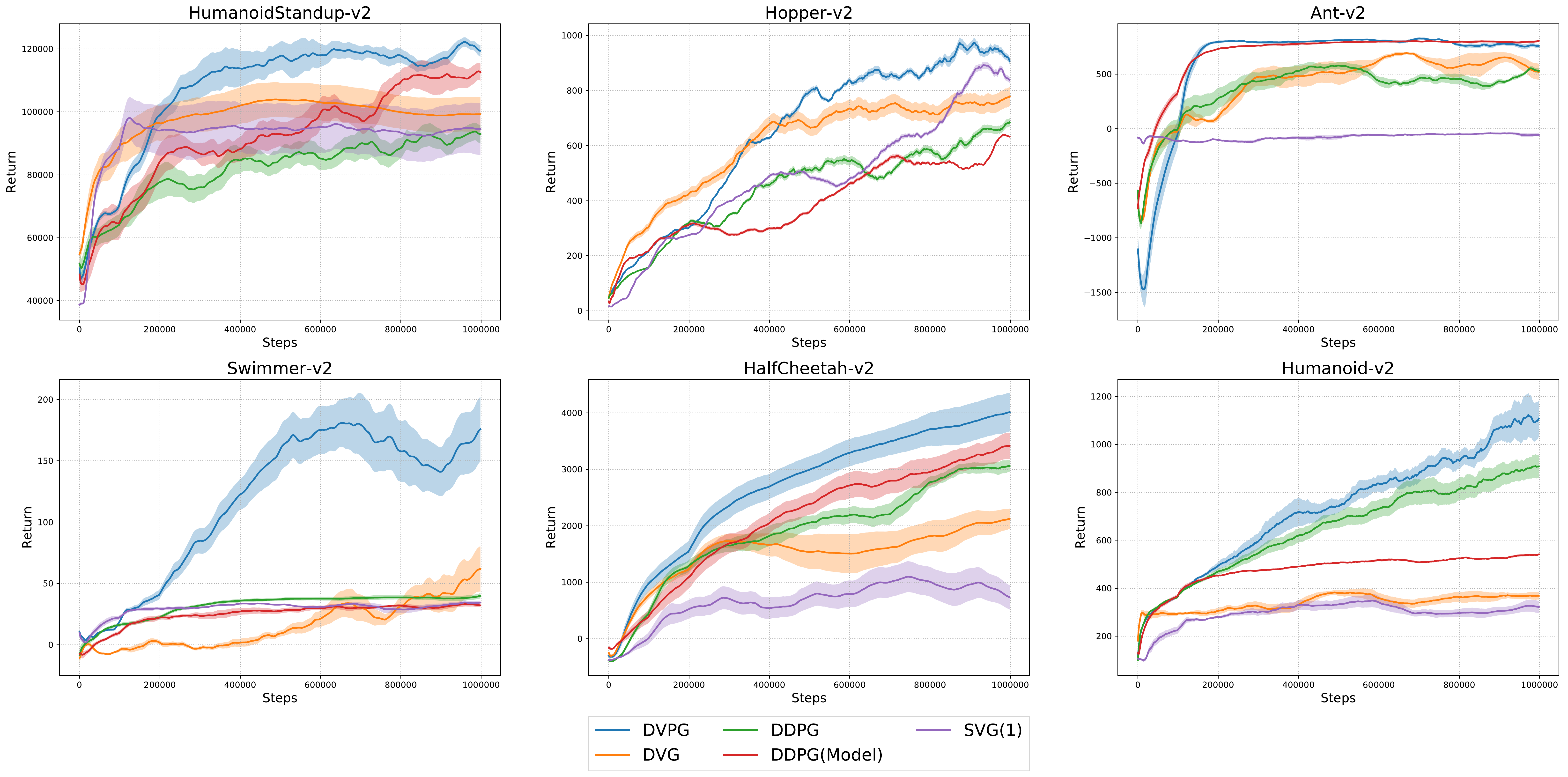}
    \caption{Performance comparisons on environments from the MuJoCo simulator.}
    \label{fig: mujoco}
\end{figure*}

\subsection{The effect of discount factors on DVG}

From Eq. (\ref{v_snew}), we get that $\nabla_{s}V^{\mu_{\theta}}(s)$ is equivalent to the infinite sum of the gradient vectors.
To study the effect of the discount factor on DVG, we train the algorithm with 2 rollout steps with different values of the discount factor on the environment InvertedPendulum-v2. 
As shown in Figure \ref{fig: discount}, $0.95$ performs the best in terms of rewards and stability while $0.85$ and $0.99$ performs comparably, while the performance of $0.8$ and $0.6$ are inferior to other values.
This is because the convergence of the computation of the gradient of the value function over the state may be slow if the discount factor is close to 1 while a smaller value of $\gamma$ may enable better convergence of  $\nabla_{s} V^{\mu_{\theta}}(s)$. 
However, the sum of rewards discounted by a too small $\gamma$ will be too myopic, and fails to perform good.
Here, $0.95$ trades-off well between the stability and the performance, which is as expected that there exists an optimal intermediate value for the discount factor.  

\begin{figure}[H]
    \centering
            \includegraphics[scale=0.32]{./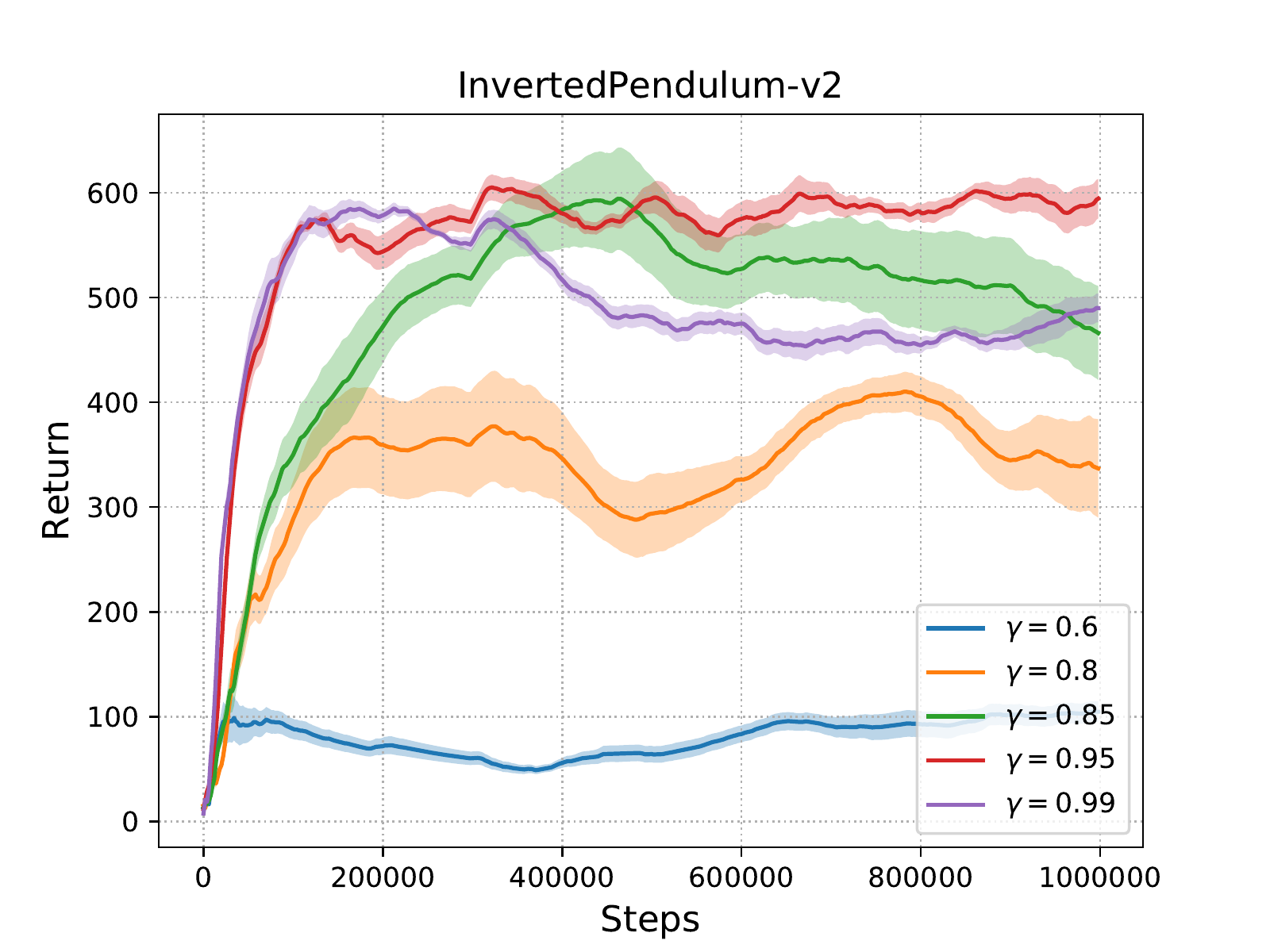}
    \caption{The effect of discount factors.}
    \label{fig: discount}
\end{figure}

\subsection{ Ablation study of DVPG}
We evaluate the effect of the weight of bootstrapping on DVPG with different values from $0.1$ to $0.9$, where the number of rollout steps is set to be 4.
From Figure \ref{fig: bootstrap}, we get that the performance of the DVPG decreases with the increase of the value $\lambda$, where $0.1$ performs the best in terms of the sample efficiency and the performance. 
Thus, we choose the value of the weight to be $0.1$ in all experiments.

\begin{figure}[H]
    \centering
            \includegraphics[scale=0.32]{./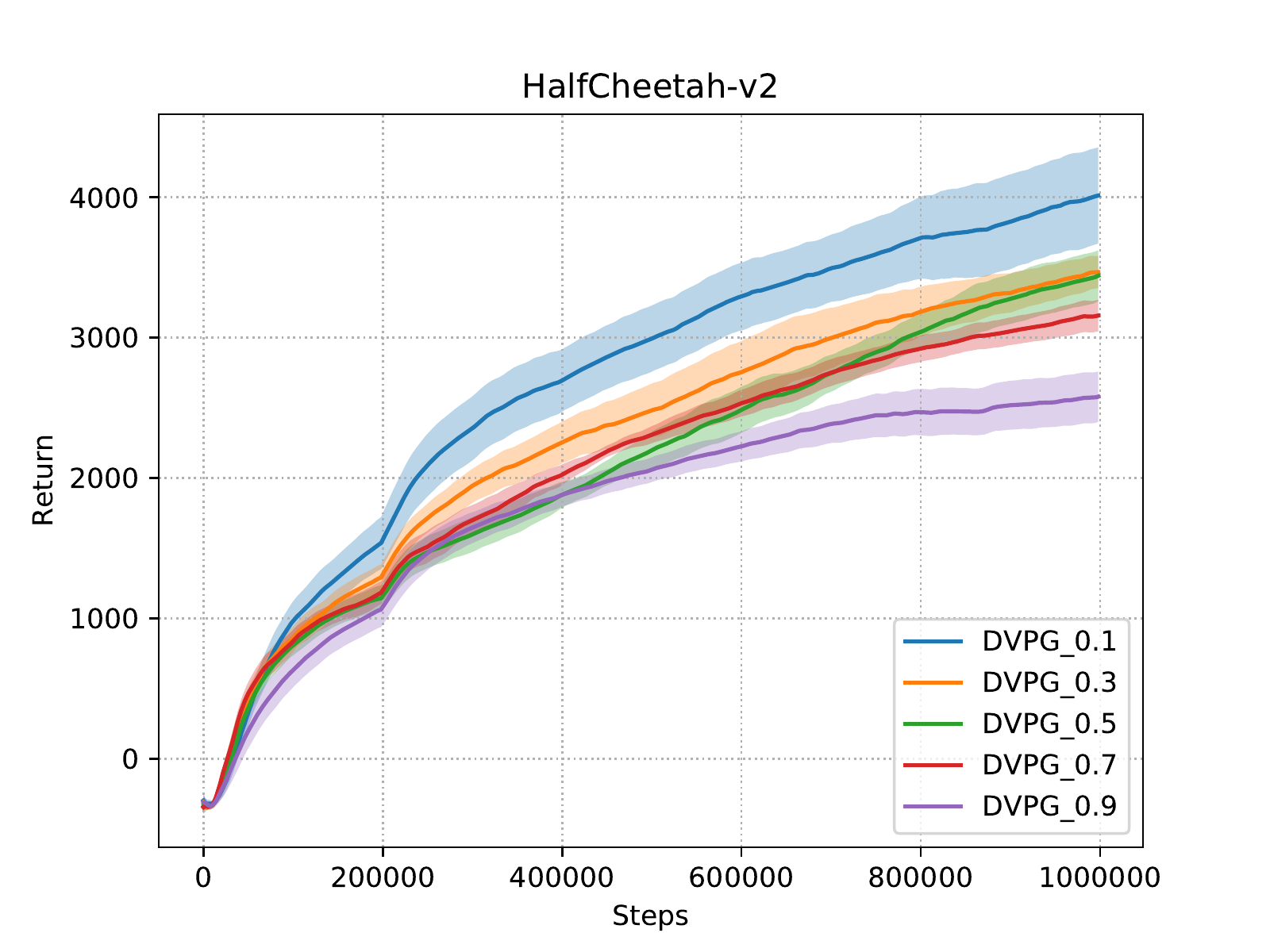}
    \caption{The weight of bootstrapping.}
    \label{fig: bootstrap}
\end{figure}

We evaluate the effect of the number of rollout steps ranging from $1$ to $5$. Results in Figure \ref{fig: rollout steps} show that DVPG with different number of rollout steps all succeed to learn a good policy, with 1 rollout step performing the best. Indeed, the number of rollout steps trade off between the model-error and the variance. There is an optimal value of the number of rollout steps for each environment, which is the only one parameter we tune. To summarize, for the number of look steps, 1 rollout step works the best on Humanoid-v2, Swimmer-v2 and HalfCheetah-v2, while 2 rollout steps performs the best on HumanoidStandup-v2, Hopper-v2 and Ant-v2. 
For fair comparisons, we choose the same number of rollout steps for both the DVG and the DVPG algorithm.  

\begin{figure}[H]
    \centering
            \includegraphics[scale=0.32]{./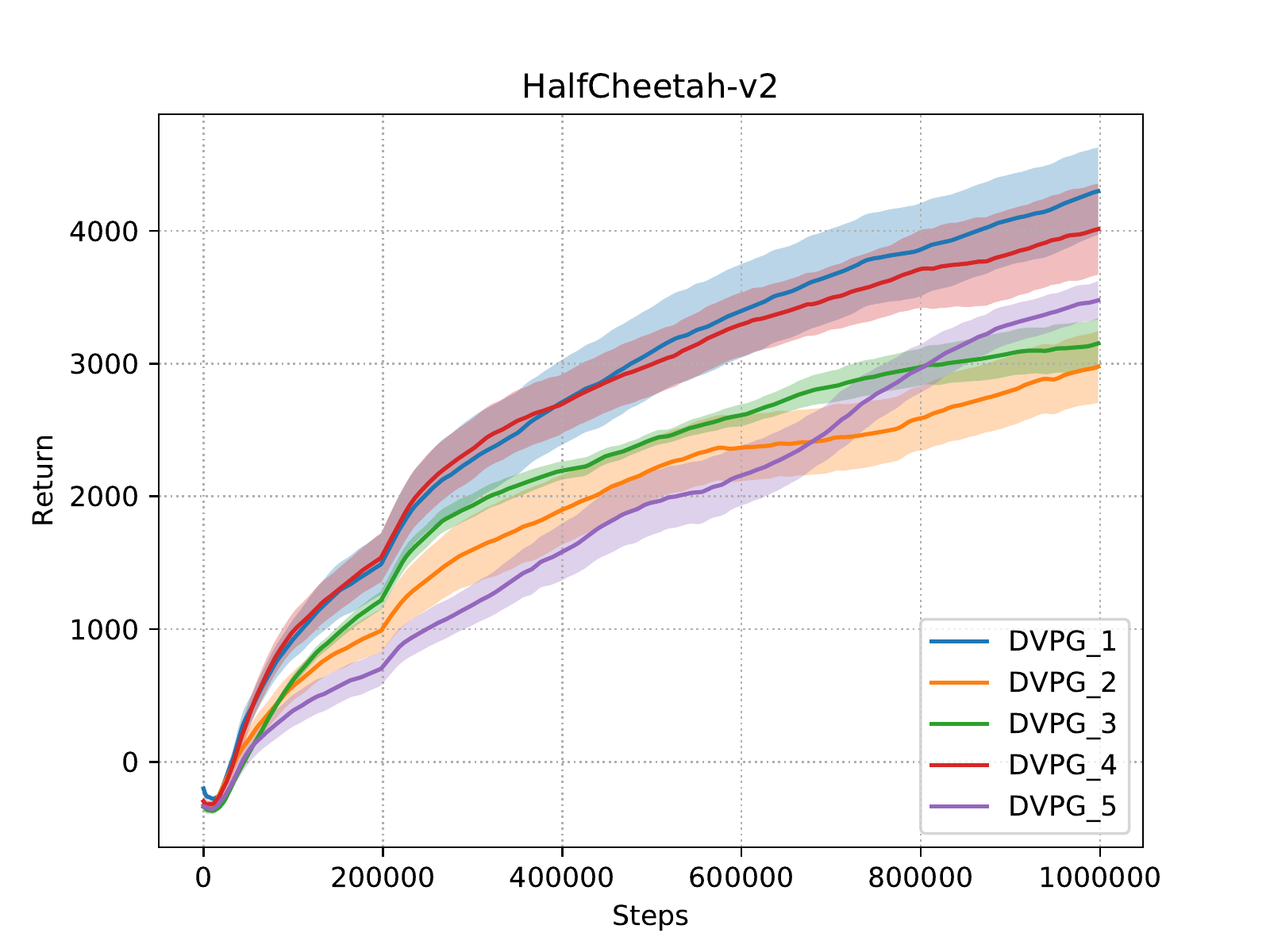}
    \caption{The number of rollout steps.}
    \label{fig: rollout steps}
\end{figure}

\subsection{Performance comparisons}

In this section we compare DVPG with the model-free baseline DDPG, and model-based baselines including DVG, DDPG(model) and SVG($1$) on several continuous control tasks on MuJoCo.  As shown in Figure \ref{fig: mujoco}, there are two classes of comparisons. 

Firstly, we compare DVPG with DDPG and DVG to validate the effect of the temporal difference technique to ensemble model-based and model-free deterministic value gradients. 
The DVG algorithm is the most sample efficient than other algorithms in environments HumanoidStandup-v2, and Hopper-v2. 
For sample efficiency, DVPG outperforms DDPG as it trades off between the model-based deterministic value gradients and the model-free deterministic policy gradients. 
In the end of the training, DVPG outperforms other two algorithms significantly, which demonstrates the power of the temporal difference technique. 
In other four environments, DVPG outperforms other algorithms in terms of both sample efficiency and performance. 
The performance of DVG and DDPG on Swimmer-v2 and Ant-v2 are comparable, while DVG performs bad in Halfcheetah-v2 and Humanoid-v2 due to the model-error. 

Secondly, we compare DVPG with SVG($1$) and DDPG with imagination rollouts.  Results show that the DVPG algorithm significantly outperforms these two model-based algorithms in terms of sample efficiency and performance, especially in environments where other model-based algorithms do not get better performance than the model-free DDPG algorithm. For the performance of the SVG($1$) algorithm, it fails to learn good policies in Ant-v2, which is also reported in \cite{kurutach2018model}.

\section{Conclusion}

Due to high sample complexity of the model-free DDPG algorithm and high bias of the deterministic value gradients with finite horizon, we study the deterministic value gradients with infinite horizon.
We prove the existence of the deterministic value gradients for a certain set of discount factors in this infinite setting. 
Based on this theoretical guarantee, we propose the DVG algorithm with different rollout steps by the model. 
We then propose a temporal difference method to ensemble deterministic value gradients and deterministic policy gradients, to trade off between the bias due to the model error and the variance of the model-free policy gradients, called the DVPG algorithm. We compare DVPG on several continuous control benchmarks. Results show that DVPG substantially outperforms other baselines in terms of convergence and performance.
For future work,  it is promising to apply the temporal difference technique presented in this paper to other model-free algorithms and combine with other model-based techniques.

\section{Acknowledgments}
The work by Ling Pan was supported in part by the National Natural Science Foundation of China Grants 61672316.
Pingzhong Tang was supported in part by the National Natural Science Foundation of China Grant 61561146398, and a China Youth 1000-talent program.

\appendix

\section{A. Proof of Corollary 1}
\textbf{Corollary 1}
\emph{
For any policy $\mu_{\theta}$ and any initial state $s$, let $(s_0,s_1,...,s_k)$ denote the loop of states following the policy and the state, let $C(s,\mu_{\theta},k)=\prod_{i=0}^{k}\nabla_{s_i}T(s_i,\mu_{\theta}(s_i))$, the gradient of the value function over the state, $\nabla_{s}V^{\mu_{\theta}}(s)$ exists if 
\begin{equation}
\begin{split}
{\gamma}^{k+1}\max\left\{||C(s,\mu_{\theta},k)||_{\infty}, ||C(s,\mu_{\theta},k)||_{1}\right\}<1.
\end{split}
\end{equation}
}

\begin{proof}

By the definition of $A(s)$, we get 
\begin{equation}
\begin{split}
\max\left\{||A(s)||_{\infty}, ||A(s)||_{1}\right\}<1.
\end{split}
\end{equation}
Then by \cite{farnell1944limits}, the absolute value of any eigenvalue of $A(s)$ is strictly less than $1$. 
By representing $A(s)$ with Jordan normal form, i.e., $A(s)=MJM^{-1}$,

\begin{equation}
\sum_{m=0}^{\infty}{A^{m}(s)}=M\sum_{m=0}^{\infty}J^{m} M^{-1}.
\end{equation}

As the absolute value of any eigenvalue of $A(s)$ is strictly less than $1$, $\sum_{m=0}^{\infty}J^{m}$ converges, then $\sum_{m=0}^{\infty}{A^{m}(s)}$ converges.
By Lemma 1, $\nabla_{s}V^{\mu_{\theta}}(s)$ converges.

\end{proof}

\section{B. Proof of Theorem 2}
\textbf{Theorem 2}
\emph{
For any policy $\mu_{\theta}$ and MDP with deterministic state transitions, if assumptions A.1 and A.2 hold, the value gradients exist, and
\begin{equation}
\label{close_v}
\begin{split}
\nabla_{\theta}V^{\mu_{\theta}}(s)=&\sum_{s'\in B(s,\theta)}^{}\rho^{\mu_{\theta}}(s,s')\nabla_{\theta}\mu_{\theta}(s')(\nabla_{a'}r(s',a')
+\gamma\\
&  \nabla_{a'} T(s',a') \nabla_{s^{''}}V^{\mu_{\theta}}(s^{''})|_{s^{''}=T(s',a'), {a'=\mu_{\theta}(s')}}),
\end{split}
\end{equation}
where $\rho^{\mu_{\theta}}(s,s')$ is the discounted state distribution starting from the state $s$ and the policy, and is defined as
$
\rho^{\mu_{\theta}}(s,s')=\sum_{t=1}^{\infty}{\gamma}^{t-1}I(s,s^{'},t,\mu_{\theta}).$
}

\begin{proof}
By  definition,
\begin{equation}
\label{V}
\begin{split}
&\nabla_{\theta}V^{\mu_{\theta}}(s)\\
=&\nabla_{\theta}Q^{\mu_{\theta}}(s,\mu_{\theta}(s))\\
=&\nabla_{\theta}(r(s,\mu_{\theta}(s))+\gamma V^{\mu_{\theta}}(s^{'})|_{s^{'}=T(s,\mu_{\theta}(s))})\\
=&\nabla_{\theta}\mu_{\theta}(s)\nabla_{a}r(s,a)|_{a=\mu_{\theta}(s)}+\gamma \nabla_{\theta}V^{\mu_{\theta}}(s^{'})|_{s^{'}=T(s,\mu_{\theta}(s))}\\
+&\gamma \nabla_{\theta}\mu_{\theta}(s) \nabla_{a}T(s,a)|_{a=\mu_{\theta}(s)} \nabla_{s^{'}}V^{\mu_{\theta}}(s^{'})|_{s^{'}=T(s,a)}.
\end{split}
\end{equation}

With the indicator function $I(s,s^{'},t,\mu_{\theta})$, we rewrite the equation (\ref{V}):
\begin{equation}
\label{gra_v}
\begin{split}
&\nabla_{\theta}V^{\mu_{\theta}}(s)\\
=&\nabla_{\theta}\mu_{\theta}(s)(\nabla_{a}r(s,a)|_{a=\mu_{\theta}(s)}\\
&+\gamma \nabla_{a} T(s,a)|_{a=\mu_{\theta}(s)} \nabla_{s^{'}}V^{\mu_{\theta}}(s^{'})|_{s^{'}=T(s,a)})\\
&+ \sum_{s'}^{}\gamma I(s,s^{'},1,\mu_{\theta})\nabla_{\theta}V^{\mu_{\theta}}(s^{'})\\
=&\nabla_{\theta}\mu_{\theta}(s)(\nabla_{a}r(s,a)|_{a=\mu_{\theta}(s)}\\
&+\gamma \nabla_{a} T(s,a)|_{a=\mu_{\theta}(s)} \nabla_{s^{'}}V^{\mu_{\theta}}(s^{'})|_{s^{'}=T(s,a)})\\
&+  \sum_{s'}^{}\gamma I(s,s^{'},1,\mu_{\theta})\nabla_{\theta}\mu_{\theta}(s^{'})(\nabla_{a^{'}}r(s^{'},a^{'})|_{a^{'}=\mu_{\theta}(s^{'})}\\
&+\gamma \nabla_{a^{'}} T(s^{'},a^{'})|_{a^{'}=\mu_{\theta}(s^{'})} \nabla_{s^{''}}V^{\mu_{\theta}}(s^{''})|_{s^{''}=T(s^{'},a^{'})})ds^{'}\\
&+  \sum_{s'}^{}\gamma I(s,s^{'},1,\mu_{\theta}) \sum_{s''}^{}\gamma I(s^{'},s^{''},1,\mu_{\theta})\nabla_{\theta}V^{\mu_{\theta}}(s^{''})\\
=&\nabla_{\theta}\mu_{\theta}(s)(\nabla_{a}r(s,a)|_{a=\mu_{\theta}(s)}\\
&+\gamma \nabla_{a} T(s,a)|_{a=\mu_{\theta}(s)} \nabla_{s^{'}}V^{\mu_{\theta}}(s^{'})|_{s^{'}=T(s,a)})\\
&+  \sum_{s'}^{}\gamma I(s,s^{'},1,\mu_{\theta})\nabla_{\theta}\mu_{\theta}(s^{'})(\nabla_{a^{'}}r(s^{'},a^{'})|_{a^{'}=\mu_{\theta}(s^{'})}\\
&+\gamma \nabla_{a^{'}} T(s^{'},a^{'})|_{a^{'}=\mu_{\theta}(s^{'})}\nabla_{s^{''}}V^{\mu_{\theta}}(s^{''})|_{s^{''}=T(s^{'},a^{'})})ds^{'} \\
&+  \sum_{s''}^{}\gamma^{2} I(s,s^{''},2,\mu_{\theta})\nabla_{\theta}V^{\mu_{\theta}}(s^{''})ds^{''}.
\end{split}
\end{equation}

By unrolling (\ref{gra_v}) with infinite steps, we get (\ref{close_v}).
\end{proof}

\section{C. Implementation Details}
In this section we describe the details of the implementation of DVPG, DVG, DDPG and DDPG (model).  The configuration of the actor network and the critic network is the same as the implementation of OpenAI Baselines. For the reward network, we use the same network structure.
Each network has two fully connected layers, where each layer has 64 units. 
The activation function is ReLU, the batch size is $128$, the learning rate of the actor is ${10}^{-4}$, and the learning rate of the critic is ${10}^{-3}$.  The learning rates of the transition network and the reward network are all ${10}^{-3}$. We also add $L_2$ norm regularizer to the loss.

For the reward network, the loss is  $\frac{ \sum_{j=1}^{n}(r(s_j,a_j|\theta_r)-r_j)^2}{n}$. For the transition network, the loss is $\frac{ \sum_j||T(s_j,a_j|\theta_T)-s_{j+1}||_2^2}{n}.$

We also compare with DDPG with model-based rollouts, i.e., besides the training of the policy on real samples, the actor is also updated by model generated samples. The detail of DDPG(model) is referred to Algorithm \ref{alg:DDPG(model)}.

\begin{algorithm*}[!h]
  \caption{The DDPG with model-based rollouts algorithm}
  \begin{algorithmic}[1]
 \STATE Initialize the reward network $r'$, transition network $T'$, critic network $Q$, actor network $\mu_{\theta}$ and target networks $Q'$, $\mu_{\theta}'$\\
 \STATE Initialize experience replay buffer $\mathcal{B}$ and model-based experience replay buffer $\mathcal{MB}$\\
  \FOR{episode$=0, ..., N-1$} 
    \FOR{$t=1, ..., T$} 
  \STATE  Select action according to the current policy and exploration noise\\
  \STATE  Execute action $a_t$, observe reward $r_t$ and new state $s_{t+1}$, and store transition $(s_t,a_t,r_t,s_{t+1})$ in $\mathcal{B}$\\
  \STATE  Sample a random minibatch of $n$ transitions from $\mathcal{B}$\\
  \STATE  Update the critic $Q$ by minimizing the TD error: $$\frac{1}{n}\sum_{i}^{n}{(r_i+\gamma Q'(s_{i+1},\mu_{\theta}(s_{i+1})) - Q(s_i,a_i))}^{2}$$\\
  \STATE  Update the reward network $r'$ and the transition network $T'$ on the batch by minimizing the square loss\\
  \STATE Estimate the policy gradients by \begin{equation}
\sum_{i=1}^{n}\frac{\nabla_{\theta}\mu_{\theta}(s_i)
\nabla_{a_i^{'}}Q^{\mu_{\theta}}(s_i,a_i^{'})|_{a_i^{'}=\mu_{\theta}(s_i)}}{n}.
  \end{equation}
    \STATE Perform model-free gradients update on the policy\\
    \STATE Update the target networks
    \ENDFOR
     \STATE Generate $K$ samples by the policy and the learned transitions, and store fictitious samples in $\mathcal{MB}$\\
      \FOR{$t=1, ..., a$}
      \STATE Sample a random minibatch of $n$ transitions from $\mathcal{MB}$\\
      \STATE Estimate the policy gradients on fictitious samples\\
      \STATE Perform model-based gradients update on the policy\\
      \ENDFOR 
      \STATE Reset the model-based buffer $\mathcal{MB}$ to be empty\\
  \ENDFOR
\end{algorithmic}
  \label{alg:DDPG(model)}
\end{algorithm*}

For the running time of the DVPG algorithm, it takes about 4 hours for running 1M steps.

\section{D. Comparisons with start of the art stochastic policy optimization methods}

\begin{figure*}[!h]
    \centering
    \subfigure[LunarLander-v2.]{
        \begin{minipage}[t]{.3\linewidth}
            \centering
            \includegraphics[scale=0.3]{./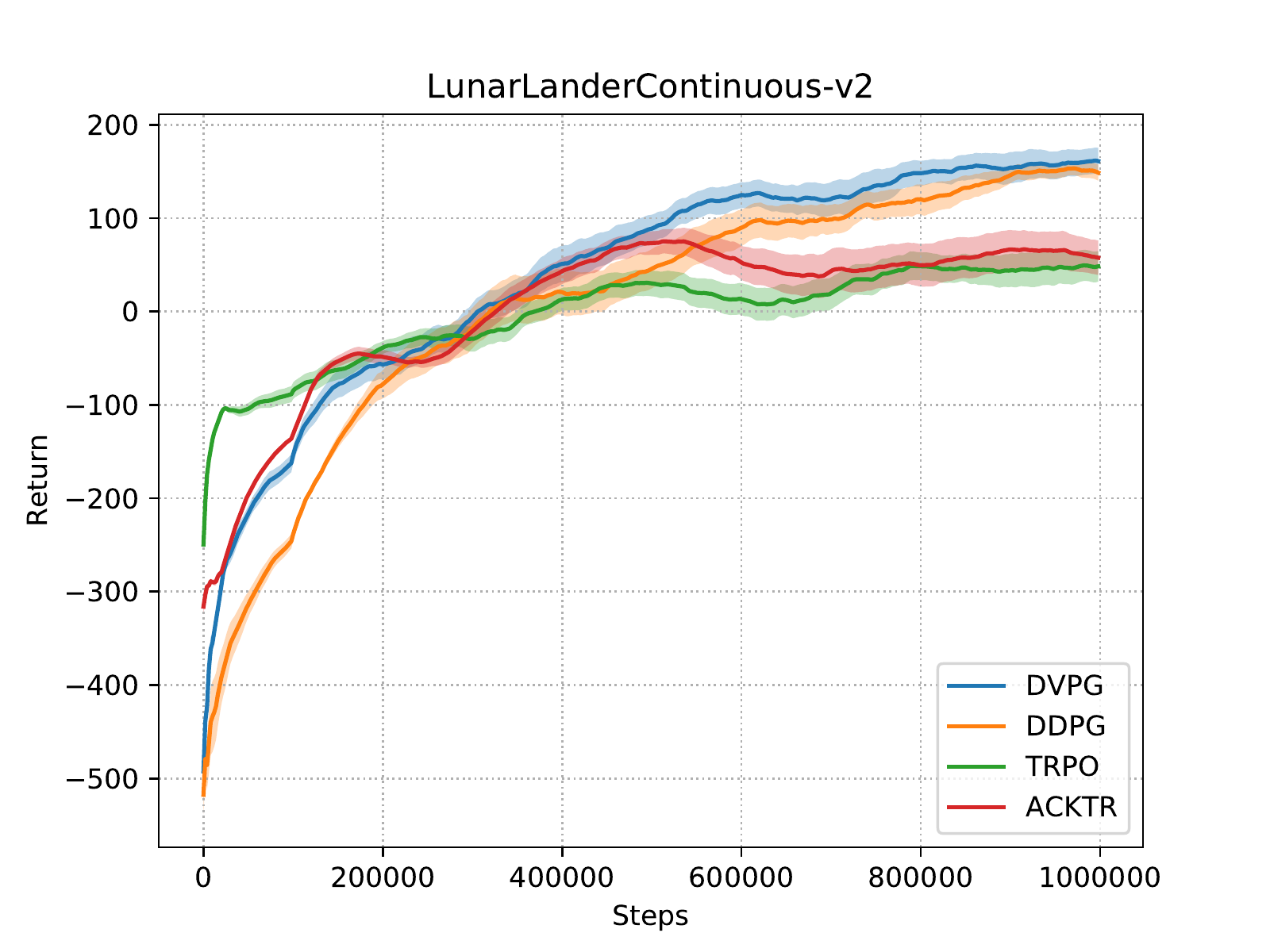}
        \end{minipage}
    }%
    \subfigure[Swimmer-v2.]{
        \begin{minipage}[t]{.3\linewidth}
            \centering
            \includegraphics[scale=0.3]{./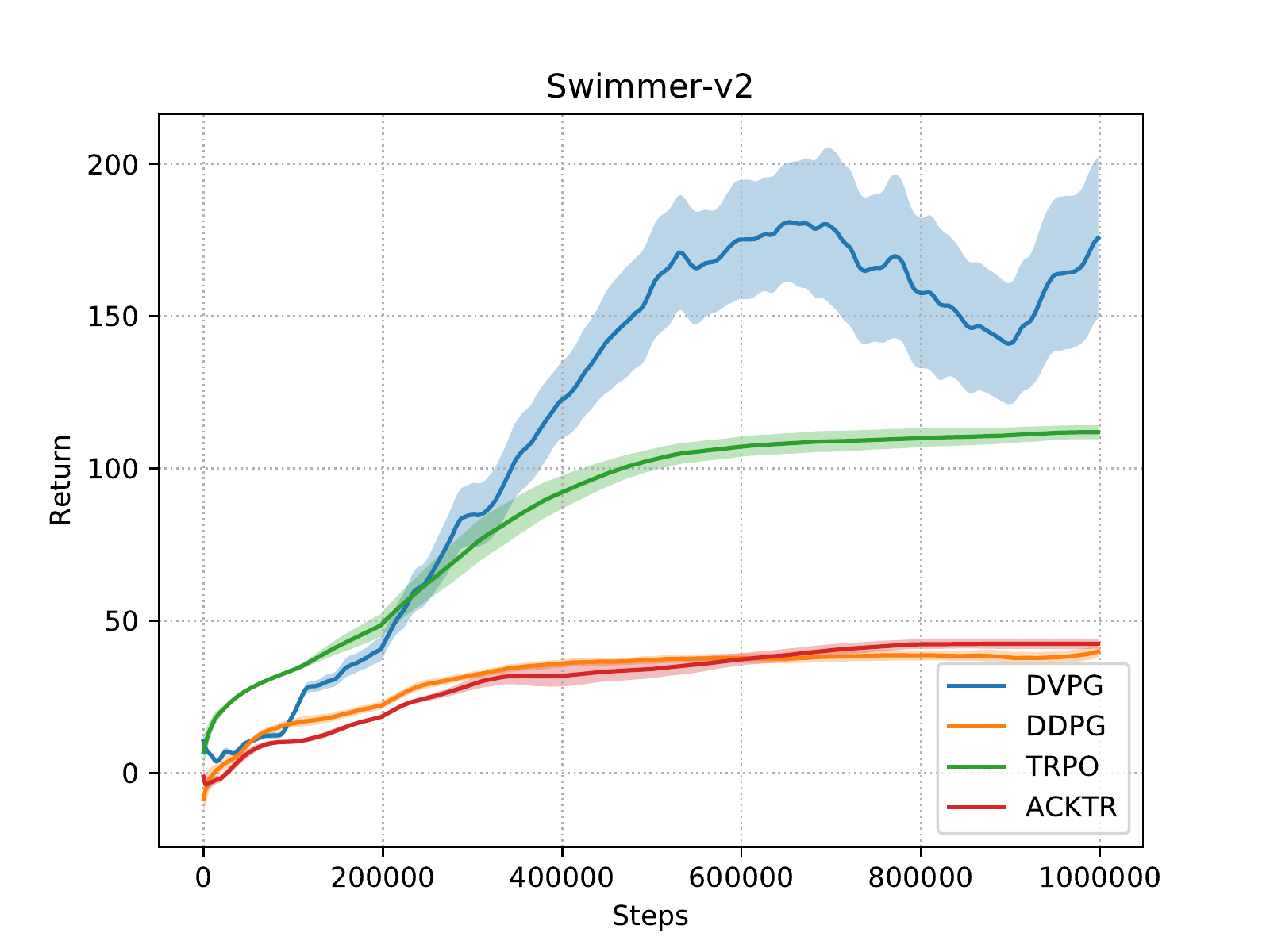}
        \end{minipage}
    }%
    \subfigure[HalfCheetah-v2.]{
        \begin{minipage}[t]{.3\linewidth}
            \centering
            \includegraphics[scale=0.3]{./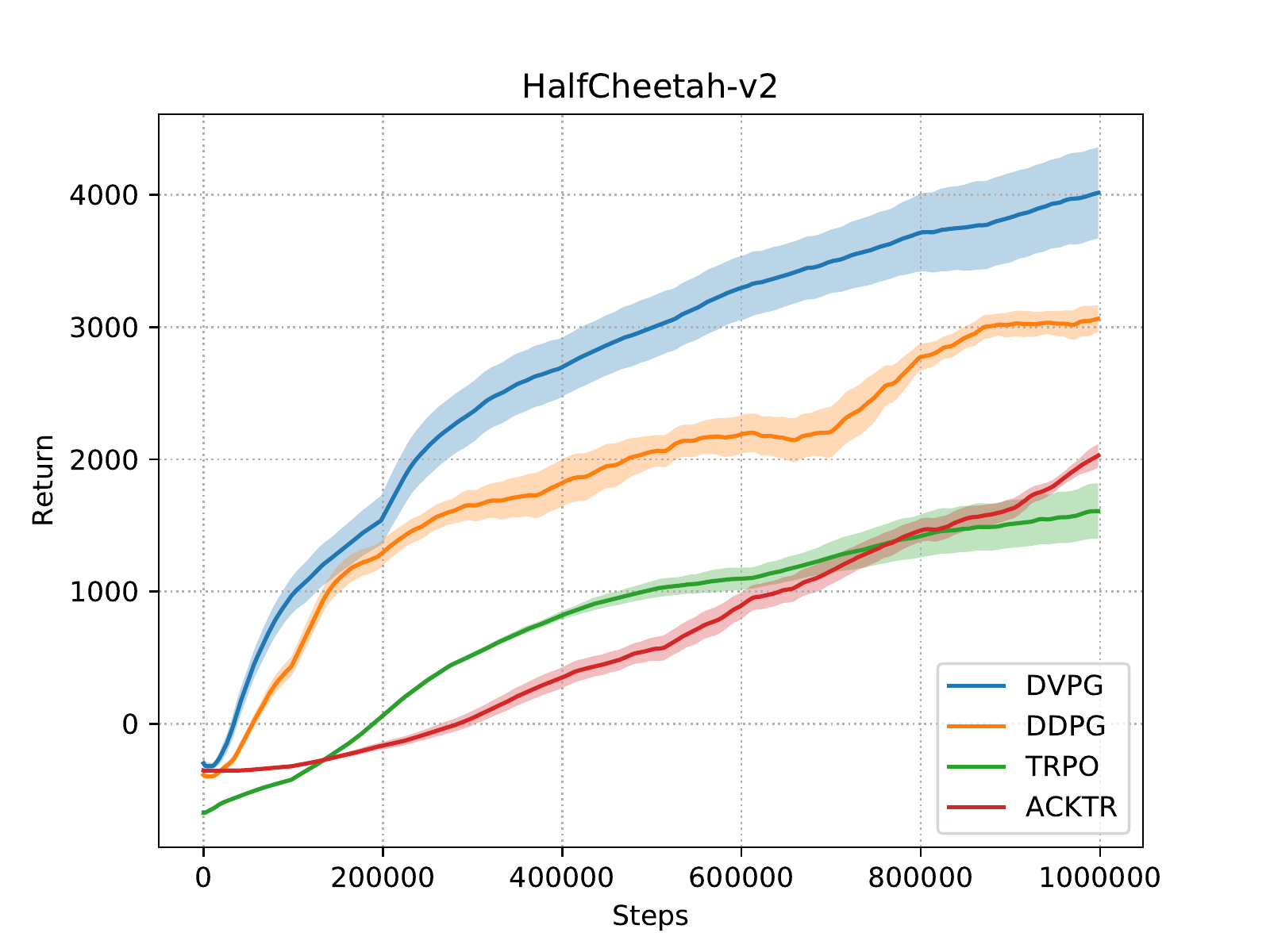}
        \end{minipage}
    }
    \subfigure[HumanoidStandup-v2.]{
        \begin{minipage}[t]{.3\linewidth}
            \centering
            \includegraphics[scale=0.3]{./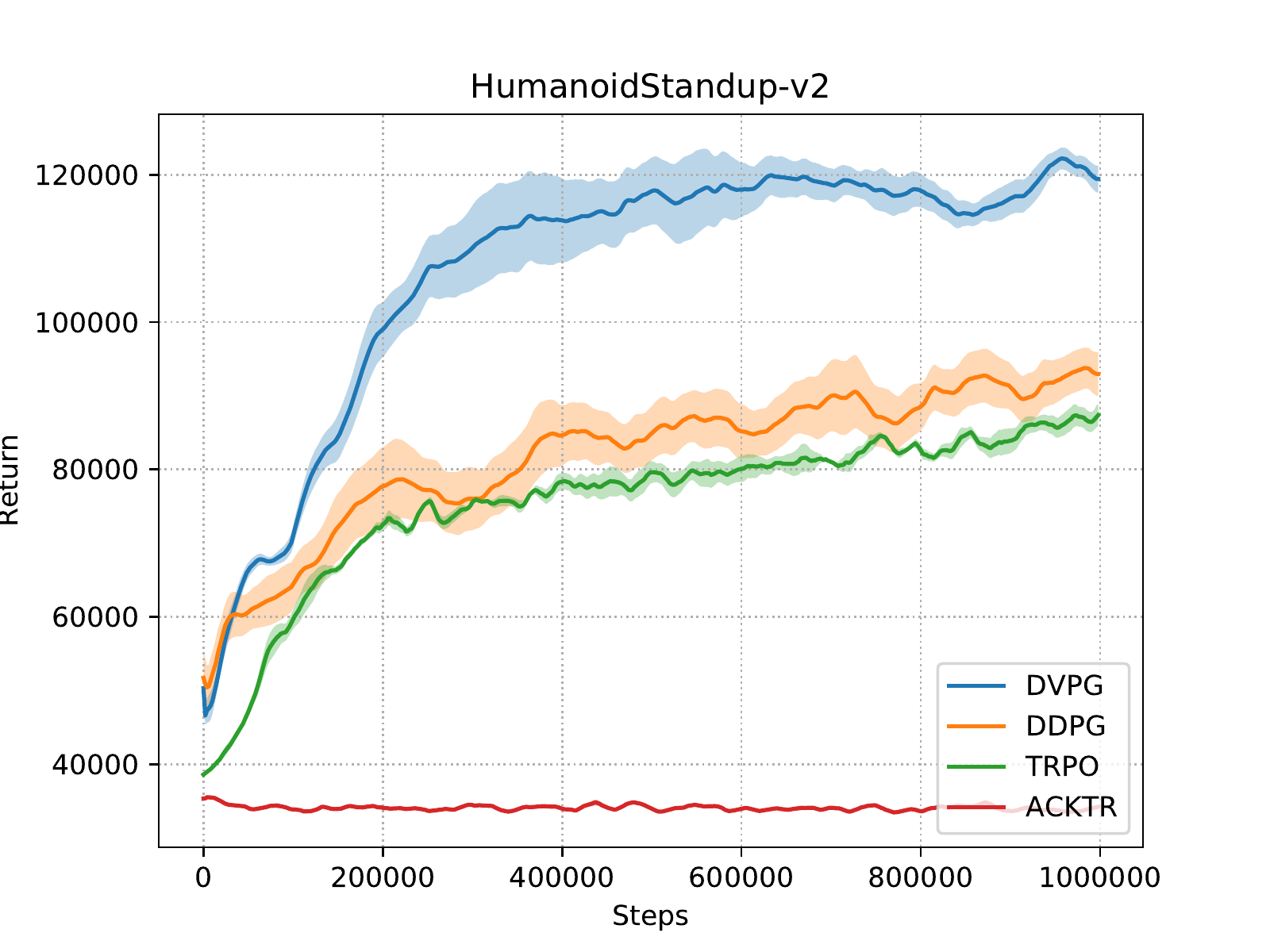}
        \end{minipage}
    }%
    \subfigure[Humanoid-v2.]{
        \begin{minipage}[t]{.3\linewidth}
            \centering
            \includegraphics[scale=0.3]{./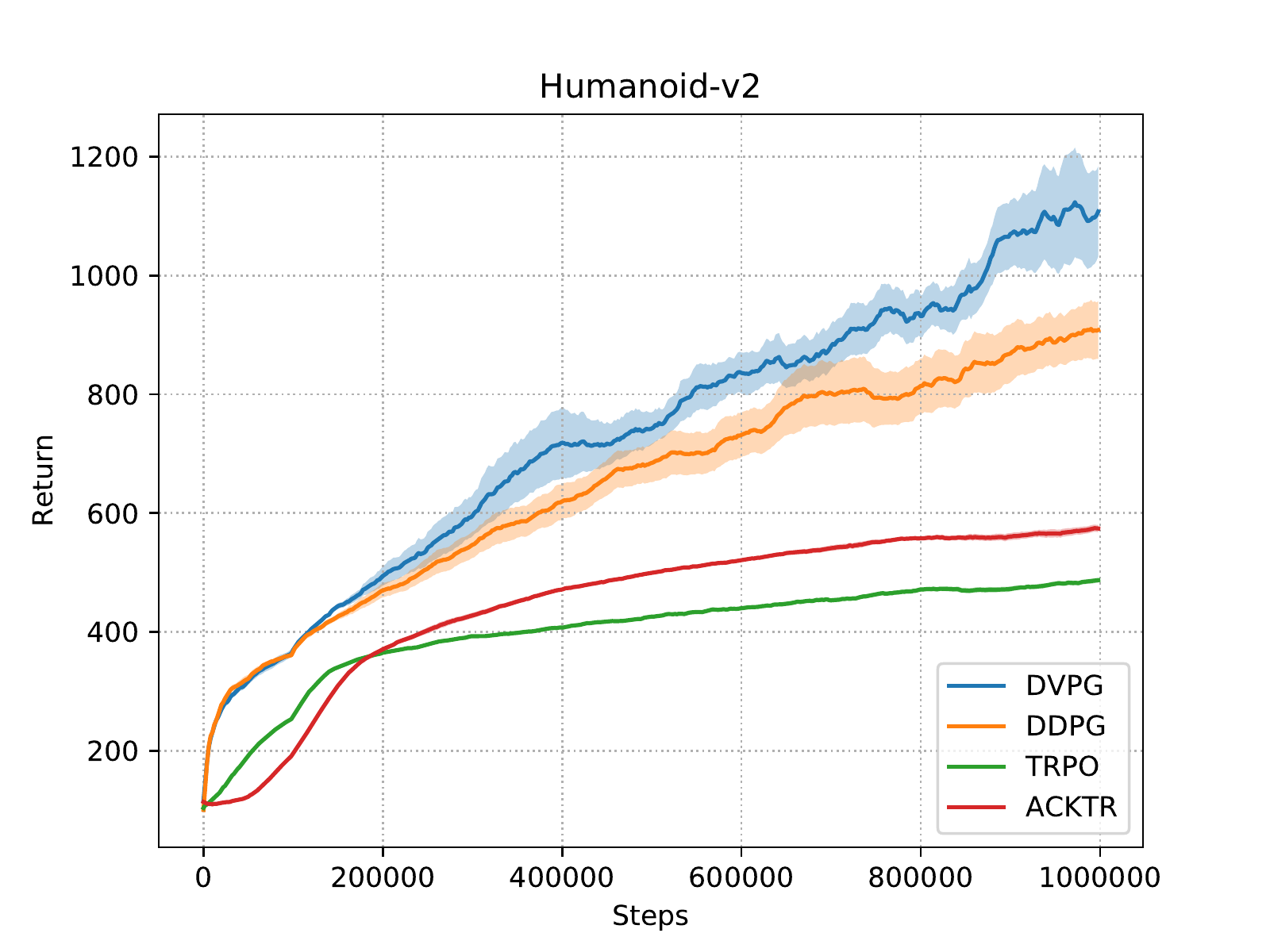}
        \end{minipage}
    }
    \caption{Return/steps of training on environments from the MuJoCo simulator.}
    \label{fig: mujoco}
\end{figure*}

We compare the DVPG algorithm and the DDPG algorithm with state of the art stochastic policy optimization algorithms, TRPO and ACKTR. As shown in Figure \ref{fig: mujoco}, results show that DVPG performs much better than DDPG and other algorithms in the environments where DDPG is more sample efficient than policy optimization algorithms. DVPG also outperforms other baselines significantly in Swimmer-v2 where DDPG is outperformed by TRPO.

\bibliography{dvpg}
\bibliographystyle{aaai}

\end{document}